\documentclass[10pt,journal,compsoc]{IEEEtran}
\usepackage{amsmath}
\usepackage{mathtools}
\usepackage{dsfont}
\usepackage{graphicx,psfrag,epsf} 
\usepackage{enumerate}
\usepackage{xcolor}
\usepackage{algorithm}
\usepackage{amsfonts}
\usepackage{algpseudocode}

\def\argmax{\mathop{\rm arg\,max}}%
\def\argmin{\mathop{\rm arg\,min}}%
\usepackage{subcaption}
\usepackage{multirow}
\usepackage{amsthm}
\usepackage[hyphens]{url}
\usepackage{hyperref}

\usepackage[T1]{fontenc}
\usepackage{adjustbox}

\newtheorem{theorem}{Theorem}

\newtheorem{remark}[theorem]{Remark}

\bibliographystyle{IEEEtran}

\ifCLASSOPTIONcompsoc
  \usepackage[nocompress]{cite}
\else
  \usepackage{cite}
\fi

\begin{document}

\title{Rethinking Cost-sensitive Classification in Deep Learning via Adversarial Data Augmentation}

\author{Qiyuan Chen, Raed Al Kontar, Maher Nouiehed, Jessie Yang, Corey Lester
\IEEEcompsocitemizethanks{
\IEEEcompsocthanksitem Qiyuan Chen, Raed Al Kontar, Jessie Yang are with the Industrial \& Operation Engineering, University of Michigan - Ann Arbor. \protect
\IEEEcompsocthanksitem Maher Nouiehed is with Industrial Engineering, American University of Beirut.
\IEEEcompsocthanksitem Corey Lester is with the College of Pharmacy, University of Michigan - Ann Arbor.
}
}

\IEEEtitleabstractindextext{%
\begin{abstract}

Cost-sensitive classification is critical in applications where misclassification errors widely vary in cost. However, over-parameterization poses fundamental challenges to the cost-sensitive modeling of deep neural networks (DNNs). The ability of a DNN to fully interpolate a training dataset can render a DNN, evaluated purely on the training set, ineffective in distinguishing
a cost-sensitive solution from its overall accuracy maximization counterpart. This necessitates rethinking cost-sensitive classification in DNNs. To address this challenge, this paper proposes a cost-sensitive adversarial data augmentation (CSADA) framework to make over-parameterized models cost-sensitive. The overarching idea is to generate targeted adversarial examples that push the decision boundary in cost-aware directions. These targeted adversarial samples are generated by maximizing the probability of critical misclassifications and used to train a model with more conservative decisions on costly pairs. Experiments on well-known datasets and a pharmacy medication image (PMI) dataset made publicly available show that our method can effectively 
minimize the overall cost and reduce critical errors, while achieving comparable performance in terms of overall accuracy.
\end{abstract}

\begin{IEEEkeywords}
Cost-sensitive Learning, Adversarial Data Augmentation, Deep Neural Networks, Multiclass Classification, Over-parametrization
\end{IEEEkeywords}}

\maketitle

\IEEEdisplaynontitleabstractindextext

\IEEEpeerreviewmaketitle

\IEEEraisesectionheading{\section{Introduction}\label{intro}}
\IEEEPARstart{M}{ulti-class} classification is one of the most fundamental tasks in modern statistics and machine learning. It has seen many success stories across a wide variety of real-life applications such as computer vision \cite{dosovitskiy2020image, liu2021swin}, natural language processing \cite{devlin2018bert, brown2020language}, anomaly detection \cite{sun2022continual}, and fairness \cite{yue2021gifair}. 
Typical classification problems treat the cost of misclassifications equally and aim to maximize the overall accuracy in expectation across all classes and data points. However, in many practical settings, some misclassifications can be far more costly and critical compared to others - often accompanied by real-life safety implications. In the presence of such costs, models should be carefully designed to prevent critical errors. To contextualize this, consider the following real-life situation on medication errors that motivates this paper.   

Medication errors occur when patients take medications that are different in ingredient, strength, or dosage (e.g., oral tablet, oral capsule) from the medication prescribed for dispensing. According to the department of health and human services, these errors result in around 3 million outpatient medical appointments, 1 million emergency department visits, and 125,000 hospital admissions each year \cite{errors}.  
In general, such errors can expose the patient to dangerous side effects or result in a patient going untreated for their medical condition.  However, not all medication errors have the same consequences.  For example, a patient accidentally receiving glipizide, a medication for lowering blood sugar levels, instead of trazodone, a medication for sleep, could lower blood sugar too much, which, if ignored, can cause seizures and result in brain damage. In fact, it is more dangerous for a patient to receive glipizide instead of trazodone than for a patient to receive trazodone instead of glipizide as the consequences of taking glipizide are greater compared to trazodone \cite{ismp}. In this paper, we propose a cost-aware classification model that relies on generating targeted adversarial examples to be used in training a cost-sensitive model that avoids critical errors. We demonstrate the effectiveness of our proposed model by applying it to a pharmacy medication image (PMI) that we made publicly available at \url{https://deepblue.lib.umich.edu/data/concern/data_sets/6d56zw997 }.

The appearance of critical errors in multiclass classification is a challenge not unique to medication dispensing. Indeed, several papers have tried 
to address this challenge under the notion of cost-sensitive learning. Perhaps one of the earliest approaches dates back to 
\cite{breiman1984classification} that considered cost-sensitive classification via decision trees. They propose a model that uses decision trees to estimate the expected cost for each class and selects the label with the least cost. 
Since then, a wide variety of methods have been proposed for cost-sensitive learning across a wide variety of applications. A snapshot of the current literature is given in Sec. \ref{sec:literature}.

Yet, modern machine learning poses a fundamental challenge to traditional cost-sensitive learning, specifically when it comes to over-parameterized models such as deep neural networks (DNN). More specifically,  over-parameterized DNNs can fully interpolate (or over-fit) a training data set in practice. Indeed, many experiments \cite{zhang2016understanding, zhang2021understanding} have shown that DNNs can readily achieve zero cross-entropy training loss even on datasets that are completely unstructured random noise. Fortunately, despite over-fitting and in contrast to conventional statistical understanding, DNNs have exhibited outstanding generalization performance on unseen data \cite{devlin2018bert, brown2020language}. This phenomenon, known as benign over-fitting, has shown the potential benefits of over-parameterized modeling in recent years.

Despite their wide success in various applications, over-parameterized DNNs face a fundamental challenge in cost-sensitive classification tasks. If a model is clairvoyant, i.e., it can always reveal the underlying truth, then critical error costs will not affect the training as no misclassifications exist. 
This phenomenon motivates rethinking cost-sensitive classification in DNNs and reveals the need to go beyond training examples in cost-sensitive learning.

This paper tackles this exact problem. Specifically, we focus on general multiclass classification problems where costs are pairwise and possibly asymmetric. Under this setting, we propose a cost-sensitive adversarial data augmentation (CSADA) framework to render over-parameterized models cost-sensitive. The overarching idea is to generate targeted adversarial examples that push the decision boundary in cost-aware direction. Such focused adversarial samples 
are data perturbations that maximize the probability of getting critical misclassifications and are used to generate more conservative decisions on costly pairs. Experiments on multiple datasets, including the PMI dataset, show that our method can effectively minimize the overall cost and reduce critical errors, while achieving comparable performance in terms of overall accuracy. 
We note that although our main focus in this paper is DNNs, our approach can be directly incorporated within any classification approach to induce cost-sensitivity.

\textbf{Organization:} \quad The rest of the paper is organized as follows. We first start with a simple motivational example to fully contextualize the discussion above. Then in Sec.~\ref{sec:literature} we provide a brief overview of relevant related work. In Sec.~\ref{sec:model} we present our model and the training algorithm, followed by an illustrative proof-of-concept that sheds light on our model's intuition in Sec.~\ref{sec:concept}.  Experiments on various classification tasks and the PMI dataset are then presented in Sec.~\ref{sec:experiments}. Finally, Sec. \ref{sec:conclusion} concludes the paper with a brief discussion.

\subsection{A Simple Motivational Example} \label{sec:example}

We first start with a simple example highlighting the challenges faced by cost-sensitive modeling in DNNs. Consider a classification problem where given observable data $x\in \mathcal{X}$; we aim to predict the class label $y\in \mathcal{Y}$. Suppose the training data is $\mathcal{D}=\{x_i,y_i\}_{i=1\dots N}$, where $N$ denotes the total number of training points. Now, consider the setting where the costs of misclassifications are not equal among all the pairs. Let matrix $\mathcal{C}$ be the cost matrix with non-negative entries $c(y,z)$ where for a given class $y\in \mathcal{Y}$, a cost $c(y,z)$ is incurred when a model predicts $z \in {\cal Y}$ for a data point with true label $y$. Notice that $c(y,z)$ need not be equal to $c(z,y)$ (recall the medical example). Moreover, assume that a correct classification incurs zero cost so that all elements on the diagonal of $\mathcal{C}$ are zero. 

In a typical classification task, we aim to learn a parameterized model that predicts the label of an input $x$. This is typically done by estimating the probability that a data sample $x$ returns a label $z \in {\cal Y}$, which we denote by $p_{z}(\theta;x)$, and selecting the label with the highest probability. 
In a conventional cost-insensitive setting, the goal is to maximize the probability of returning the true label. 
\begin{align} \label{eq:simple}
    \max_{\theta}~\mathds{E}_{x,y} \left[p_{y}(\theta;x) \right]  \approx \frac{1}{N}\cdot\sum_{i=1}^N \left[p_{y_i}(\theta;x_{i}) \right] \, ,
\end{align}

\noindent where $p_{y_i}(\theta;x_{i})$ is the correct prediction probability. Notice that if we simply add a natural logarithm to the predicted probabilities, then \eqref{eq:simple} becomes the well-known cross-entropy loss that is the most common loss function used in DNN classification tasks. 

Now, for cost-sensitive settings, our aim is to minimize the overall cost. A natural extension of \eqref{eq:simple} is given by
\begin{align} \label{eq:cost} \nonumber
    \min_{\theta}~&\mathds{E}_{x,y} \left[\sum_{z\in \mathcal{Y}}c(y,z)\cdot p_z(\theta; x)\right]  \\ 
    \approx~\min_{\theta}~ & \frac{1}{N}\cdot\sum_{i=1}^N \left[\sum_{z\in \mathcal{Y}}c(y_i,z)\cdot p_z(\theta; x_i) \right] \, ,
\end{align}

\noindent where $c(\cdot,\cdot)$ are non-negative costs of misclassifications. Note that if $c(y,z)=1$ for all $y, \, z \in {\cal Y}$, then~\eqref{eq:cost} retrieves~\eqref{eq:simple}. Moreover, one can easily show that both objectives~\eqref{eq:simple} and~\eqref{eq:cost} achieve the optimal solution if and only if $p_{y_i}(\theta;x_{i})=1$ or equivalently 
$p_{z}(\theta;x_{i})=0$ for all $\{z \neq y_i; z \in \mathcal{Y}\}$. 
Hence, if the functional space is unconstrained or the model capacity is large enough to fully interpolate the training examples, solving \eqref{eq:cost} becomes equivalent to solving \eqref{eq:simple}. 


When considering an over-parameterized setting, it is often the case that there exists a $\theta$ such that $p_{y_i}(\theta;x_{i})\approx1$ for all the training data points in $\mathcal{D}$ \cite{zhang2016understanding,arpit2017closer,belkin2018overfitting}. Therefore, the cost matrix $\mathcal{C}$ will not affect the models as no misclassifications occur, and the cost is always zero on the training dataset. More specifically, if the model is able to perfectly fit training data, then any loss function evaluated purely on the training set becomes ineffective in distinguishing a cost-sensitive solution (in \eqref{eq:cost}) from its overall-accuracy maximization counterpart (in \eqref{eq:simple}). 

\begin{figure}[htbp!]
  \centering
  \begin{subfigure}[b]{\linewidth}
  \centering
    \includegraphics[width=0.65\linewidth]{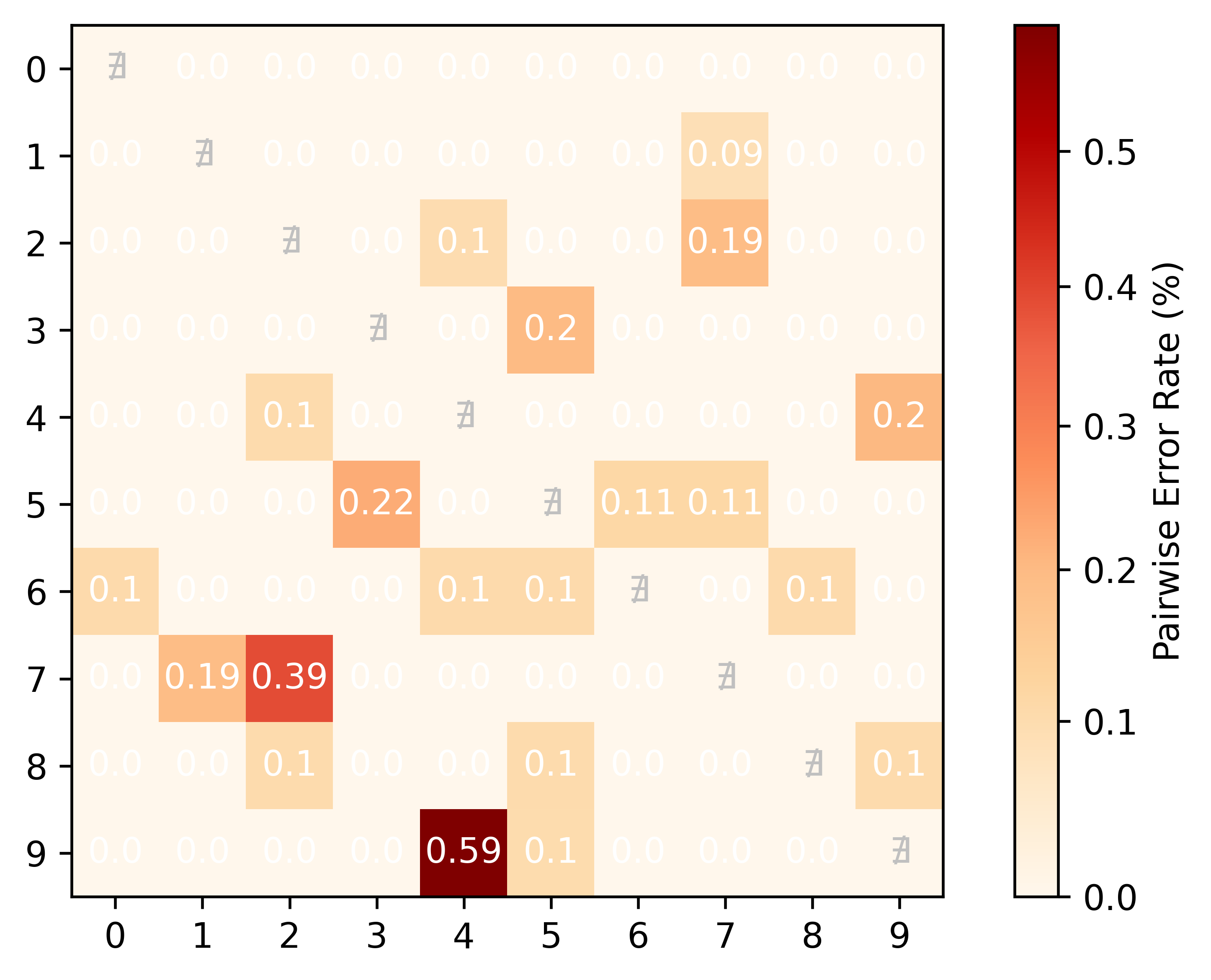}
    \caption{Baseline model}
    \label{fig:MNIST_Before1}
  \end{subfigure}
  \begin{subfigure}[b]{\linewidth}
  \centering
    \includegraphics[width=0.65\linewidth]{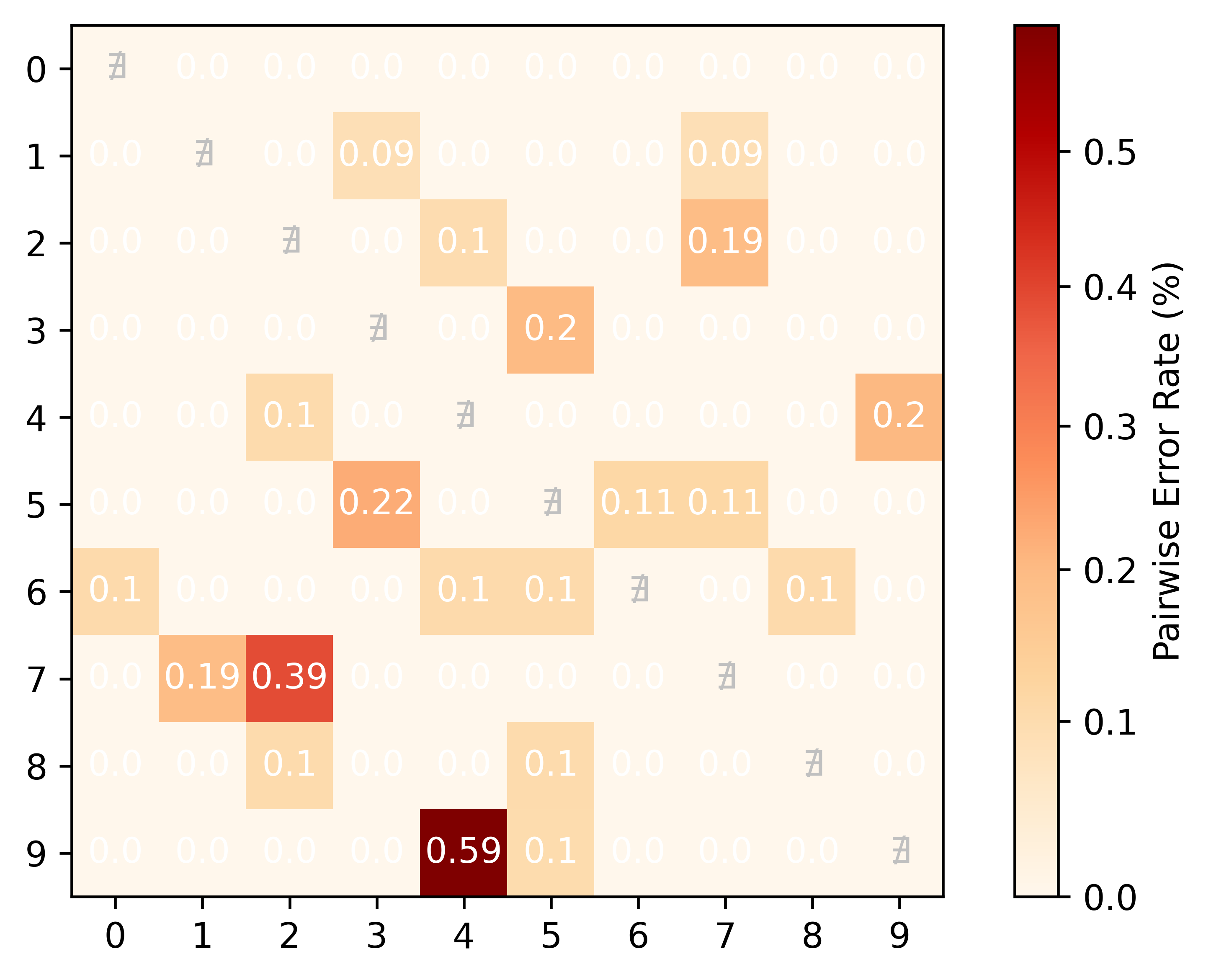}
    \caption{Cost-sensitive model}
    \label{fig:MNIST_Penalty2}
  \end{subfigure}
  \caption{Similarity of Baseline and  cost-sensitive model}
  \label{fig:motivation}
\end{figure}

Next, we provide a numerical example highlighting the failure of cost-sensitive learning in DNNs. We consider the popular MNIST dataset comprising of handwritten images of digits 0-9 \cite{deng2012mnist} trained on ResNet-34. Using a cross-entropy loss, we train a vanilla Baseline model. Details on experimental settings can be found in Sec. \ref{sec:experiments} where we revisit this experiment. For the Baseline model, at the end of the last epoch, the model reaches $100\%$ accuracy on the training set. Fig. \ref{fig:MNIST_Before1} shows the pairwise error rate across the ten classes.

Now starting with the pre-trained model and for every possible pair $(y',z')$ between the ten classes, we train a cost-sensitive DNN where only predicting $y'$ as $z'$ incurs a cost of one while all the other error costs are set to zero. We then define an extreme case of the cross-entropy loss where the sole goal is to prevent the model from making critical errors. Starting from the pre-trained model, the objective is given as
\begin{align}\label{eq:extreme}
   \min_{\theta} \frac{1}{N}\cdot\sum_{i=1}^N \left[-\mathds{1}_{y_i=y'}\cdot\log (1 - p_{z'}(\theta;x_i))\right] .
\end{align}

The purpose of this test is to observe if this strong cost-sensitive penalty can bias an overfitted model to reduce critical errors. The results for all 90 experiments are shown in Fig. \ref{fig:MNIST_Penalty2}. In the figure, each entry of the matrix corresponds to the pairwise error rate of the experiment done, with this pair being the critical one. Indeed, the results confirm the previous discussion. Even with an extremely biased objective aiming only to minimize the critical error rate, we see no improvement over the Baseline. This again confirms that in over-parametrized models, cost-sensitive training can be very challenging if we completely depend on a training set that can be perfectly fit. Further illustrations along this line can be seen in Sec. \ref{sec:experiments}. Our proposed method in Sec. \ref{sec:model} goes beyond the training set by generating adversarial examples that are specifically targeted to induce cost sensitivity.

\section{Literature Overview} \label{sec:literature}
\subsection{Cost-Sensitive Learning}
Settings where misclassification errors have different costs have been studied under the umbrella of cost-sensitive learning. Most existing research is on traditional models like logistic regression \cite{bahnsen2014example}, decision trees \cite{lomax2013survey}, and support vector machines \cite{iranmehr2019cost}. For neural networks (NNs), cost-sensitive NNs were first studied by \cite{kukar1998cost}. In their work, a cost-weighted empirical risk function was introduced where misclassification probabilities are multiplied by their corresponding costs. Later on, \cite{zhou2005training} extended this approach to NN with imbalanced datasets where methods such as oversampling and undersampling classes were introduced to induce cost-sensitivity. The authors show that their methods were effective in shallow NN. Cost-sensitive Deep Neural Network (CSDNN) \cite{chung2015cost} is one of the earliest works that develop effective cost-sensitive methods for DNNs by incorporating cost into the softmax layer. The performance of this approach was challenged by \cite{chung2020cost} when the model is an over-parameterized DNN. 

In general, cost-sensitive literature, including the work mentioned above, can be decomposed into three categories. The first category makes no change during the training phase but incorporates cost during inference/prediction. Mainly after predictive probability vectors of all classes are attained, they are multiplied with a cost matrix to get the expected cost for each class. The class with the least cost is selected as the cost-sensitive prediction \cite{breiman1984classification,domingos1999metacost,zadrozny2001learning}. These methods can be incorporated into any cost-sensitive training approach. However, they rely on the accuracy of the predicted probability vectors, which are usually over-confident in over-parameterized NNs \cite{guo2017calibration}. The second category makes model-specific modifications \cite{tu2010one, chung2015cost}. For example, \cite{tu2010one} modifies the SVM model to predict a cost vector instead of probabilities. Following a similar idea, \cite{chung2015cost} proposes CSDNN and $\text{CNN}_{\text{SOSR}}$ which incorporate a regression layer to the NN to predict costs. One limitation of this category is that algorithms are often specialized and cannot be generalized to other models. For instance, CSDNN uses a special neural network that consists of fully connected layers and sigmoid activation. This limits its applications in other NN structures such as ResNet \cite{he2016deep} or Transformers \cite{vaswani2017attention}. The third category modifies the training data distribution. As aforementioned, the most widely adopted approach in this category is replacing the empirical risk with a cost-weighted empirical risk \cite{kukar1998cost, zadrozny2003cost, chan1999toward,zhou2005training}. However, as explained in Sec. \ref{sec:example}, re-weighting empirical risk is ineffective in over-fitted models. 


The method proposed in this paper is closest to the third category, but instead of re-weighting data points, we incorporate targeted adversarial examples that can effectively move the decision boundaries of a trained DNN in cost-aware directions in the presence of overfitting. Being a data augmentation method, our framework can be applied to most DNN structures as well as models beyond NNs.


\subsection{Adversarial Attacks and Adversarial training}
While DNNs have outstanding performance on modern machine learning tasks, they are known for their fragility to adversarial attacks \cite{szegedy2013intriguing}. One can adversarially add an imperceptible perturbation to the input data and significantly change outputs. Such perturbed data are called adversarial examples. Gradient-based attacks are commonly used in generating adversarial examples. Common methods include Fast Gradient Sign Method (FGSM) \cite{goodfellow2014explaining},  DeepFool \cite{moosavi2016deepfool}, and Projected Gradient Descent (PGD) \cite{kurakin2016adversarial}. Interestingly, research has shown that over-parameterization leads to higher sensitivity to adversarial attacks in linear models \cite{donhauser2021interpolation}, and random feature regression \cite{hassani2022curse}. To address this issue, several methods use the generated adversarial examples in training to improve models' robustness \cite{Volpi2018unseen}. Another interesting property observed by \cite{cao2017mitigating} is that adversarial examples are located near the decision boundaries of NNs. Based on this observation, \cite{heo2019knowledge} uses adversarial examples to depict the decision boundaries of NNs and uses the adversarial examples as boundary supporting samples for knowledge distillation. 

Recent literature has seen work on cost-sensitive robustness against adversarial attacks  \cite{zhang2018cost, shen2022adversarial}. While robustness in literature refers to surviving an adversarial attack (e.g., FGSM, PGD), cost-sensitive robustness aims at improving robustness for costly pairs. Our work tackles a different problem where there is no adversary in the inference stage. Rather, we propose targeted data augmentation to prevent critical errors on future \textit{natural} data. We also point out that there exist methods that use the generative adversarial network (GAN) for data augmentation, often with a focus on imbalanced datasets \cite{sampath2021survey}. Yet, GAN generates synthetic examples to mimic the natural training examples, which are then added to classes with few data to re-balance the datasets. Our adversarial augmentation scheme aims to generate targeted adversaries that push decision boundaries rather than replicate natural data.

\section{Model Development} \label{sec:model}

Our problem can be viewed through the lens of adversarial training of statistical and machine learning models. Unlike typical settings, the adversary in our model aims at generating targeted adversarial examples that increase the chance of having critical errors. The defender, however, aims at finding a cost-aware robust model that avoids such errors. In simple words, our proposed model generates targeted adversarial examples to train a cost-aware robust model against critical errors.


Consider a classification model $f(\theta, \cdot)$ parameterized by $\theta$ and let $p_z(\theta; \, (x_i, \,y_i))$ be the probability of classifying point $x_i$ with class $z$. For a given pair $(y, \, z)$, we solve a maximization objective that finds a perturbation of the input data that maximizes the $z$-th class predicted probability as follows:
\begin{equation}\label{max_model}
  \delta^{(y,z)} = 
\argmax_{\|\delta\| \leq \epsilon} p_z\left(\theta;\left(x+\delta,y\right)\right),  
\end{equation}
\noindent where $\|\cdot\|$ is a distance measure, and $\epsilon$ is the threshold that limits the magnitude of the perturbation. This is repeated for each data point $(x_i, y_i)$ and for each $z \in {\cal Y}$. The purpose of these maximization problems is to find targeted attacks on a sample input data $x_i$ aiming for directions that favor critical classes. 
Our overreaching goal is to find data samples that fools the model to make costly errors. These samples are then used in training to create cost-aware decision boundaries that reduce critical misclassification errors. Mathematically speaking, we propose the following penalized cost-aware bi-level optimization formulation of the problem
\begin{align}\label{bi-level formulation}
\min_{\theta}~ &\ell_{augmented}(\theta;x,y,\delta) \nonumber\\=\min_{\theta}~&\nonumber \frac{1}{N}\sum_{i=1}^N~\Big[\ell\left(f(\theta,x_i),y_i\right)\\
&+\lambda\sum_{z \in \mathcal{Y}}\Tilde{c}(y_i,z)\ell\left( f\left(\theta,x_i+\delta^{(y_i,z)}\right),y_i\right)\Big] \\
\text{s.t.}& \quad
\delta^{(y_i,z)} = 
\argmax_{\|\delta\|_p \leq \epsilon} p_z\left(\theta;\left(x_i+\delta,y_{i}\right)\right)\nonumber
\end{align}
\noindent where $\ell(\cdot)$ is the {\it cross-entropy loss}, $f(\theta, x_i)$ is the latent representation of the input $x_i$, and $\theta$ is a vector of model parameters. Our formulation penalizes the objective with a cost-aware weighted sum of critical misclassification errors. The first term of our objective is a typical empirical risk objective that measures the sum of losses between the true labels $\{y_i\}_{i=1}^N$ and the corresponding model outputs $\{f(\theta, x_i)\}_{i=1}^N$. The second term, however, is a penalty term that penalizes missclassifications of augmented examples according to their corresponding weights. In particular, the second term is the weighted sum of the losses incurred by the examples generated by the maximization problems. The weights $\Tilde{c}(y_i,z)$ are defied by $c(y_i,z)^\tau/\sum_{y,z\in |\mathcal{Y}|}{c(y,z)^\tau}$, where $\tau$ is a temperature value that controls the emphasis on critical costs. A higher temperature will help filter out smaller costs when there are many classes. The higher the cost of a misclassification error, the higher the penalty coefficient of that term. Moreover, $\lambda$ is a hyper-parameter that regulates the trade-off between the the first term that corresponds to the regular empirical loss and the penalty term which penalizes critical misclassification errors for perturbed data. To summarize, our maximization problem generates $|{\cal Y}|$ adversarial perturbations for each data point targeting all classes $z \in {\cal Y}$. These examples are further used in the minimization problem for finding optimal model parameters for a cost-sensitive penalized empirical risk formulation that penalizes critical errors.

\subsection{Relation to Adversarial Training}
Typical adversarial training problems are formulated as min-max optimization problems that search for optimal model parameters when using worst-case data perturbations. In particular, the adversary in typical adversarial training aims at finding adversarial examples that maximize the loss objective, i.e., data perturbations that fool the model the most. The defender, however, aims at finding a robust model that minimizes the loss when using the perturbed data.

Unlike the typical setting, 
our model generates and exploits targeted adversarial examples that increase the chance of having critical errors. This major difference results in distinct objectives for the max and min optimization problems. We next show that, in binary classification settings, the bi-level optimization problem in~\eqref{bi-level formulation} can be formulated as a min-max non-zero-sum game.

\begin{theorem}\label{lm: Min-Max Formulation}
Consider the objective defined in~\eqref{bi-level formulation} with binary labels $\{y, \bar{y} = 1-y\}$ and $\ell(\cdot)$ being the cross-entropy loss. Then solving ~\eqref{bi-level formulation} is equivalent to solving the following min-max problem
\begin{align}\label{eq:min-max formulation}
  \min_{\theta} \max_{\|\delta\|_{\infty} \leq \epsilon} \,& \ell_{augmented}(\theta;x,y,\delta) \nonumber\\
  =  \min_{\theta} \max_{\|\delta\|_{\infty} \leq \epsilon} \,& \frac{1}{N}\sum_{i=1}^N~\left[\ell\left(f(\theta,x_i),y_i\right)\right. \nonumber\\
  &\left.+ \lambda \Tilde{c}(y_i,\bar{y}_i) \ell\left( f\left(\theta,x_i+\delta_i\right),y_i\right)\right]  
\end{align}
\noindent where $\delta = \left( \delta_{i} \right)$  for all $i \in \{1, \ldots ,N\}$.
\end{theorem}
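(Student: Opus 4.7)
The plan is to show that, in the binary case, the adversary's objective inside the max of~\eqref{eq:min-max formulation} (the cross-entropy loss) drives the perturbation to the same optimum as the probability-maximizing objective in the constraint of~\eqref{bi-level formulation}. Once that is established, we can hoist the maximization out of the constraint and fuse it with the outer minimization to obtain the min-max form.

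First I would simplify the penalty sum in~\eqref{bi-level formulation}. Since $\mathcal{Y}=\{y_i,\bar{y}_i\}$ and $c(y_i,y_i)=0$ (so $\tilde{c}(y_i,y_i)=0$), only the $z=\bar{y}_i$ term survives, reducing the penalty to $\lambda\,\tilde{c}(y_i,\bar{y}_i)\,\ell\!\left(f(\theta,x_i+\delta^{(y_i,\bar{y}_i)}),y_i\right)$, where $\delta^{(y_i,\bar{y}_i)}=\argmax_{\|\delta\|_\infty\le\epsilon}p_{\bar{y}_i}(\theta;(x_i+\delta,y_i))$. Thus the bi-level problem already has exactly the structure of~\eqref{eq:min-max formulation} except that the inner perturbation is defined by a different criterion.

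Next I would identify the two adversarial criteria. For binary classification with the cross-entropy loss we have
\begin{equation*}
\ell\!\left(f(\theta,x_i+\delta),y_i\right)=-\log p_{y_i}(\theta;x_i+\delta)=-\log\bigl(1-p_{\bar{y}_i}(\theta;x_i+\delta)\bigr).
\end{equation*}
Because $-\log(1-t)$ is strictly increasing in $t$ on $[0,1)$, the map $\delta\mapsto\ell(f(\theta,x_i+\delta),y_i)$ is a strictly increasing function of $\delta\mapsto p_{\bar{y}_i}(\theta;x_i+\delta)$. Hence over the compact feasible set $\{\delta:\|\delta\|_\infty\le\epsilon\}$,
\begin{equation*}
\argmax_{\|\delta\|_\infty\le\epsilon}\ell\!\left(f(\theta,x_i+\delta),y_i\right)=\argmax_{\|\delta\|_\infty\le\epsilon}p_{\bar{y}_i}\!\left(\theta;(x_i+\delta,y_i)\right)=\delta^{(y_i,\bar{y}_i)}.
\end{equation*}

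Finally, since the first term $\ell(f(\theta,x_i),y_i)$ does not depend on $\delta$, the inner maximization in~\eqref{eq:min-max formulation} decouples across $i$ and acts only on the penalty term; by the equivalence above it selects exactly the perturbation prescribed by the constraint in~\eqref{bi-level formulation}. Substituting this maximizer back shows the two objectives coincide pointwise in $\theta$, so the outer minimizations in $\theta$ produce identical optimal values and minimizers. The main subtlety—and the only nontrivial step—is the argmax equivalence in the previous paragraph; it works precisely because binary classification forces $p_{y_i}+p_{\bar{y}_i}=1$, and would fail without the strict monotonicity this identity provides. In the multi-class case, no analogous single-variable reduction is available, which is why the theorem is stated specifically for $|\mathcal{Y}|=2$.
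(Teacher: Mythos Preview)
Your proposal is correct and follows essentially the same approach as the paper: reduce the penalty sum to the single $z=\bar{y}_i$ term, use separability in $i$ to decouple the inner maximization, and establish the argmax equivalence via the monotonicity of $-\log(1-t)$ together with the binary identity $p_{y_i}+p_{\bar{y}_i}=1$. The paper carries out the monotonicity step through explicit sigmoid notation rather than your direct $-\log(1-t)$ argument, but the logic is identical.
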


\begin{proof}
Since the functions are disjoint, we first obtain
\begin{align*}
&\max_{\|\delta\|_{\infty} \leq \epsilon} \;\; \sum_{i=1}^N \Tilde{c}(y_i,\bar{y}_i)\ell\left( f\left(\theta,x_i+\delta_i\right),y_i\right) \\
= &\sum_{i=1}^N \Tilde{c}(y_i,\bar{y}_i) \max_{\|\delta\|_{\infty} \leq \epsilon}\ell\left( f\left(\theta,x_i+\delta_i\right),y_i\right). 
\end{align*}
To complete the proof, it suffices to show that
\begin{align*}
&\argmax_{\|\delta_i\| \leq \epsilon} \;\ell\left( f\left(\theta,x_i+\delta_i\right),y_i\right) \\
=& \argmax_{\|\delta_i\| \leq \epsilon} \; p_{\bar{y}_i} \left(\theta; \left(x_i+\delta_i\right),y_i \right).
\end{align*}
In binary classification settings, the softmax layer is simply a sigmoid function, which we denote by $\sigma (\cdot)$. Let $r_{y_i}\left(\theta; \left(x_i + \delta_i \right) \right)$ be the latent representation corresponding to class $y_i$ of the data point $x_i + \delta_i$ before the sigmoid function. With $\ell(\cdot)$ being the cross-entropy loss, we obtain
\begin{align*}
\ell\left( f\left(\theta,x_i+\delta_i\right),y_i\right)& = -\log \; \sigma\left( r_{y_i}\left(\theta; \left(x_i + \delta_i \right) \right)\right) \\
&= -\log \; \left( 1 - \sigma\left(- r_{y_i}\left(\theta; \left(x_i + \delta_i \right) \right)\right)\right)\\
&= -\log \left(1- \; \sigma\left( r_{\bar{y}_i}\left(\theta; \left(x_i + \delta_i \right) \right)\right) \right),
\end{align*}
where the last equality holds due the binary setting assumption. Knowing that $-\log(\cdot)$ is a monotonically decreasing function, we get
\begin{align*} 
&\argmax_{\|\delta_i\| \leq \epsilon} \;\ell\left( f\left(\theta,x_i+\delta_i\right),y_i\right) \\
=~& \argmin_{\|\delta_i\| \leq \epsilon} \;\left[1- \sigma\left( r_{\bar{y}_i}\left(\theta; \left(x_i + \delta_i \right) \right)\right) \right]\\
=~& \argmax_{\|\delta_i\| \leq \epsilon}  \;\sigma\left( r_{\bar{y}_i}\left(\theta; \left(x_i + \delta_i \right) \right)\right)\\
=~&  \argmax_{\|\delta_i\| \leq \epsilon} \;p_{\bar{y}_i}\left(\theta; \left(x_i+\delta_i\right),y_i \right),
\end{align*}
which completes the proof.

\end{proof}

\begin{remark}
In the binary case, maximizing the probability of the untrue label is equivalent to minimizing the probability of the true label, which translates to maximizing the cross-entropy loss. The last implication results from the monotone nature of the $\log$ function. This relation allows us to formulate our problem as a min-max non-zero-sum game. The formulation presented in Theorem~\ref{lm: Min-Max Formulation} can be seen as a cost-aware adversarial objective that further penalizes errors that cost more. This model can be applied in settings where {\it false positive} and {\it false negative} errors have different costs (e.g., medication dispensing example in Sec. \ref{intro}).
\end{remark}

The result presented in Theorem~\ref{lm: Min-Max Formulation} fails to hold in non-binary classification settings. A simple counter-example is presented for a 3-label classification problem. Consider a data point $(x,y)$ for which
\begin{align*}
&r_{y}(\theta; (x + \delta)) = 3\delta,\\
&r_{z}(\theta; (x + \delta)) = 2\delta, \\
\mbox{ and }\; &r_{3}(\theta; (x + \delta)) = -10\delta,
\end{align*}
where $r_{3}(\theta; (x + \delta))$ is the latent representation for the third class. Then,
\begin{align*}
&\argmax_{\|\delta\|\leq \epsilon}\ell\left( f\left(\theta,x+\delta \right),y\right) \\
=~& \argmax_{\|\delta\|\leq \epsilon} -\log \; \sigma\left( r_{y}\left(\theta; \left(x + \delta \right) \right)\right)
\end{align*}
is maximized by setting $\delta = -\epsilon$. However,
\[\argmax_{\|\delta\| \leq \epsilon} \; p_{z} \left(\theta; \left(x+\delta\right),y \right) >0.\]

This counter-example shows that generating targeted examples according to the maximization problem in~\eqref{bi-level formulation} is not equivalent to maximizing the loss function as used in typical adversarial training. This is mainly due to the nature of our problem, which aims at generating targeted adversarial examples instead of worst-case perturbations. Despite having distinct objectives, our formulation can be seen through the lens of non-zero-sum games. In the next section, we discuss the algorithm proposed to solve the objective presented in~\eqref{bi-level formulation}.

\subsection{Multi-step Gradient Descent Ascent with Rejection}

Recent years have witnessed extensive research for solving min-max problems arising from adversarial settings. One of the first algorithms developed for adversarial training is the Fast Gradient Sign Method (FGSM) proposed by Goodfellow et al. \cite{goodfellow2014explaining}. This algorithm can be seen as a single-step method that constructs an adversarial example by solving a linearized approximation of the inner maximization problem. A multi-step variant of FGSM, denoted by $K$-PGD, that applies $K$ gradient ascent steps for solving the maximization problem was proposed in \cite{madry2018towards}. Several variants of this multi-step ascent descent algorithm have been widely deployed in saddle point formulations. Despite having no convergence guarantees in non-convex settings, such approaches have shown wide empirical success. In fact, no algorithm is known to converge in general non-convex non-concave optimization problems. In this paper, we use a variant of the multi-step ascent descent method for solving the problem defined in~\eqref{bi-level formulation}. More specifically, our algorithm performs multiple ascent steps to generate targeted perturbed adversarial examples and then performs a gradient descent step on the cost-aware penalized objective.

Our overreaching goal is to induce critical errors in training by generating data samples that are close to the decision boundary between the corresponding labels. This idea of generating samples close to the decision boundary was inspired by \cite{moosavi2016deepfool}. As illustrated in Sec.~\ref{sec:concept}, having data points near the boundary is particularly important in over-fitted models. We deploy this in our algorithm by imposing a rejection mechanism that rejects steps that generate examples with a label different from the true or targeted labels. Moreover, when the attack succeeds, i.e., the perturbed data sample is classified as $z$, which is our target, the algorithm returns this perturbed sample. In our setting, one justification for stopping after successful attacks is that successful attacks suffice to push the decision boundaries of over-parameterized models. Our algorithm is detailed in Algorithm~\ref{alg:max}.

\begin{algorithm}[H]
\caption{Projected Gradient Ascent with Rejection} \label{alg:max}
\begin{algorithmic}[1]
\State \textbf{Input:} Data point $\{x, y\}$, direction $z$, and weights $\theta$
\State Initialize ${\delta}^{(0)} \gets 0$
\If{pre-attack prediction $f(x+{\delta}^{(k-1)},\theta)$ is not $y$}
\State \textbf{Output:} $\delta^{(0)}$
\EndIf
\For{$k=1,\dots,K$}
\State $\Delta^{(k)} = \eta_2 \cdot \nabla_{\delta}\log(p_z(\theta;(x+{\delta}^{(k-1)},y)))$
\State ${\delta}^{(k)} \gets \text{Proj}_{\mathbb{B}(0,\epsilon)}\left[{\delta}^{(k-1)} + \Delta^{(k)} \right]$

\If{post-attack prediction $f(x+{\delta}^{(k)},\theta) \notin \{y,z\}$}
\State \textbf{Output:} ${\delta}^{(k-1)}$
\ElsIf{post-attack prediction $f(x+{\delta}^{(k)},\theta)$ is $z$}
\State \textbf{Output:} ${\delta}^{(k)}$
\EndIf
\EndFor
\State \textbf{Output:} ${\delta}^{(K)}$
\end{algorithmic}
\end{algorithm}

Our algorithm starts by checking whether, for a given model parameter $\theta$, the model predicts the true label. This is determined in Step 3 of the algorithm, which guarantees that only data samples that are correctly predicted are modified. For such data samples, our algorithm iteratively performs projected gradient ascent steps as long as the model is predicting the true label. If the predicted label is neither $y$ nor $z$, the new step is rejected, and the previous iterate is returned. Moreover, when the model predicts $z$, the ascent steps are terminated, and the current iterate is returned. Otherwise, the model continues performing projected ascent steps up to $K$ iterations. The final adversarial examples are fed along with the original data to perform a gradient descent step on the penalized cost-aware objective. The complete algorithm is detailed in Algorithm~\ref{alg:min}. In the next section, we describe our stochastic version of the proposed algorithm.

\begin{algorithm}[H]
\caption{Multi-step Gradient Decent-Ascent}\label{alg:min}
\begin{algorithmic}[1]
\State \textbf{Input:} Natural data set $\mathcal{D}$, cost matrix $\mathcal{C}$
\State \textbf{Output:} Trained weights $\theta^{(T)}$
\State Initialize $\theta^{(0)}$ as pretrained weights on $\ell(\theta;x,y)$
\For{$t=1,\dots,T$}
\For{$z\in \mathcal{Y}$}
\State ${\delta}_{i}^{(y_i,z)} \gets$ \textbf{Algorithm \ref{alg:max}}$(x_{i},y_{i},z,\theta^{(t-1)})$
\EndFor
\State $\theta^{(t)} \gets \theta^{(t-1)} - \eta_1 \cdot \nabla_{\theta} \ell_{augmented}(\theta^{(t-1)};x,y,\delta)$
\EndFor
\end{algorithmic}
\end{algorithm}

\subsection{Stochastic Multi-Ascent Descent with Rejection}
One important challenge for our proposed algorithm is the computational complexity incurred by generating $(|{\cal Y}|-1)$ adversarial examples for each data sample. This becomes computationally intractable when the number of data samples or the number of classes is large. In this section, we propose a stochastic version of the algorithm that samples at each iteration a batch of data and {\it one} critical pair which are then used to compute an unbiased estimate of the gradient of the main objective. More specifically, our proposed method randomly samples a batch from the training data, denoted by ${\cal B}$, and samples a critical pair from a categorical distribution. The word stochastic often implies using a batch of samples at every iteration instead of the whole dataset. The word stochasticity here also implies sampling a critical pair for every selected batch. This reduces the number of generated adversarial examples in each iteration from $(|{\cal Y}|-1)\cdot N$ to one.  We define our critical pair sampling distribution according to the cost of the pairs. More specifically, we choose the probability of selecting a specific pair $(y_{\mathcal{B}},z_{\mathcal{B}}) \in (\mathcal{Y},\mathcal{Y})$ to be the normalized cost $\Tilde{c}(y_{\mathcal{B}},z_{\mathcal{B}})$. For a given batch ${\cal B}$ and its corresponding critical pair selection $(y_{\mathcal{B}},z_{\mathcal{B}})$, our stochastic objective can be defined as
\begin{align*}
\ell_{stochastic}(\theta;x,y,\delta_{{\cal B}})=\frac{1}{|\mathcal{B}|}&\sum_{i\in \mathcal{B}}~\Big[\ell\left(\theta;(x_{i},y_{i})\right) \\
& + \lambda\ell\left(\theta;\left(x_i+\delta_{{\cal B}}^{(y_i,z_{\mathcal{B}})},y_{i}\right)\right)\Big],
\end{align*}
where
\[
\delta_{{\cal B}}^{(y_i, z_{{\cal B}})}= 
\begin{cases}
\begin{array}{ll}
 \displaystyle\argmax_{\|\delta\| \leq \epsilon} \, p_{z_{{\cal B}}} \left(\theta; (x_i + \delta, y_i) \right) & \mbox{if } y_i = y_{{\cal B}}\\
 0 & \mbox{otherwise}.
\end{array}
\end{cases}
\]

One can easily notice that the gradient of $\ell_{stochastic}$ is an unbiased estimator of the gradient of $\ell_{augmented}$ which is defined in~\eqref{bi-level formulation}. The detailed algorithm proposed for solving the stochastic version of our model is detailed in Algorithm~\ref{alg:stoch_min}.

\begin{algorithm}[H]
\caption{Multi-step Stochastic Gradient Decent-Ascent}\label{alg:stoch_min}
\begin{algorithmic}[1]
\State \textbf{Input:} Natural data set $\mathcal{D}$, cost matrix $\mathcal{C}$
\State \textbf{Output:} Trained weights $\theta^{(T)}$
\State Initialize $\theta^{(0)}$ as pretrained weights on $\ell(\theta;x,y)$
\For{$t=1,\dots,T$}
\State Sample a mini-batch $\mathcal{B}\subseteq [N]$
\State Sample one pair $(y_{\mathcal{B}},z_{\mathcal{B}})$
\For{$i$ in $\mathcal{B}$}
\If{$y_i=y_{{\cal B}}$} 
\State$\delta_{{\cal B}}^{(y_i,z_{\mathcal{B}})} \gets$ \textbf{Algorithm \ref{alg:max}}$(x_{i},y_i,z_{\mathcal{B}},\theta^{(t-1)})$
\EndIf
\EndFor
\State $\theta^{(t)} \gets \theta^{(t-1)} - \eta_1 \cdot \nabla_{\theta} \ell_{stochastic}(\theta^{(t-1)};x,y,\delta_{{\cal B}})$
\EndFor
\end{algorithmic}
\end{algorithm}

\section{Proof of Concept} \label{sec:concept}
\begin{figure*}[htb]
  \centering
  \begin{subfigure}[b]{0.26\linewidth}
    \includegraphics[width=\linewidth]{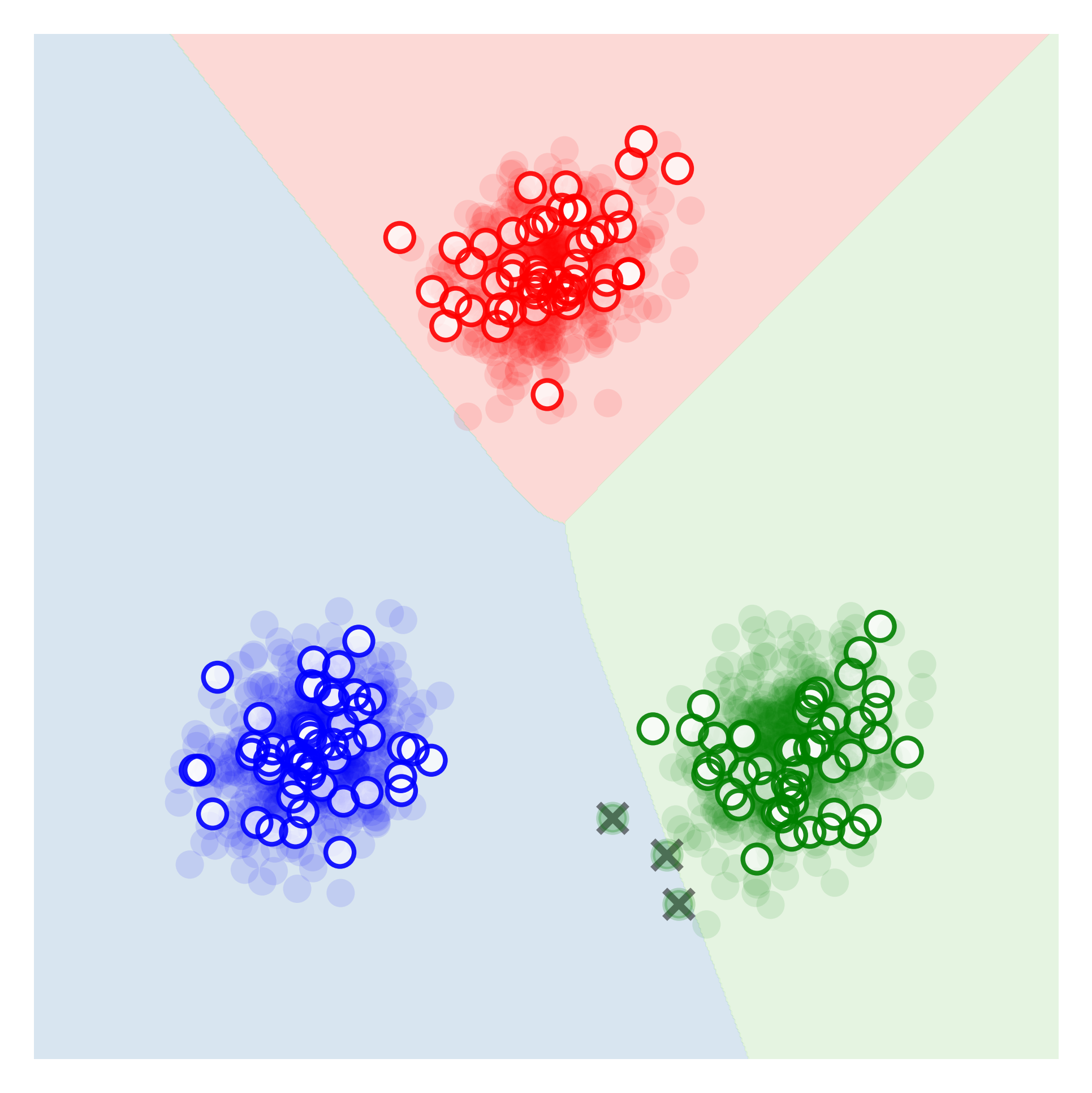}
    \caption{Critical errors (in crosses)}
    \label{fig:toy1}
  \end{subfigure}
  \begin{subfigure}[b]{0.26\linewidth}
    \includegraphics[width=\linewidth]{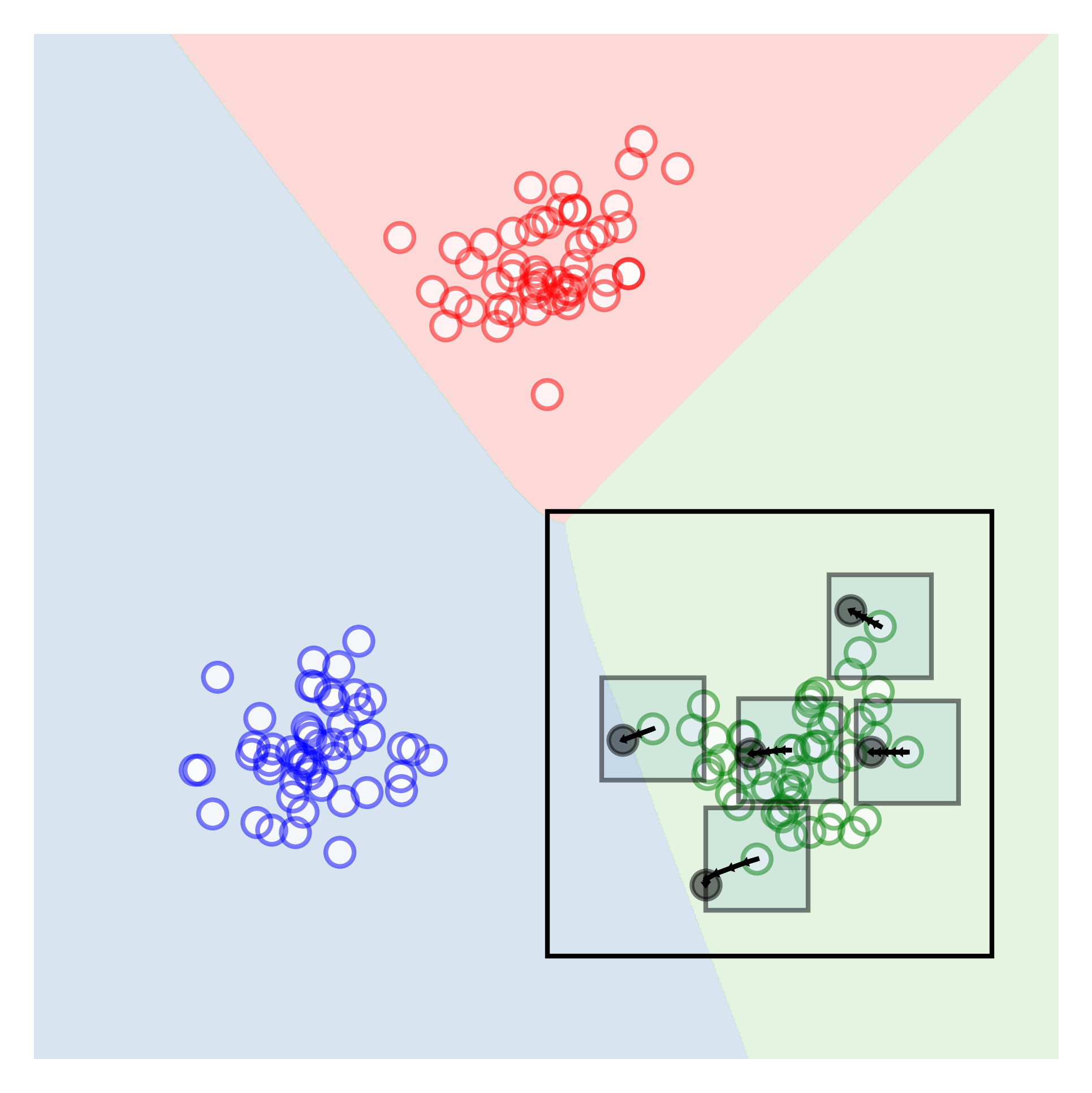}
    \caption{Adversarial examples (before)}
    \label{fig:toy2}
  \end{subfigure}
    \begin{subfigure}[b]{0.26\linewidth}
    \includegraphics[width=\linewidth]{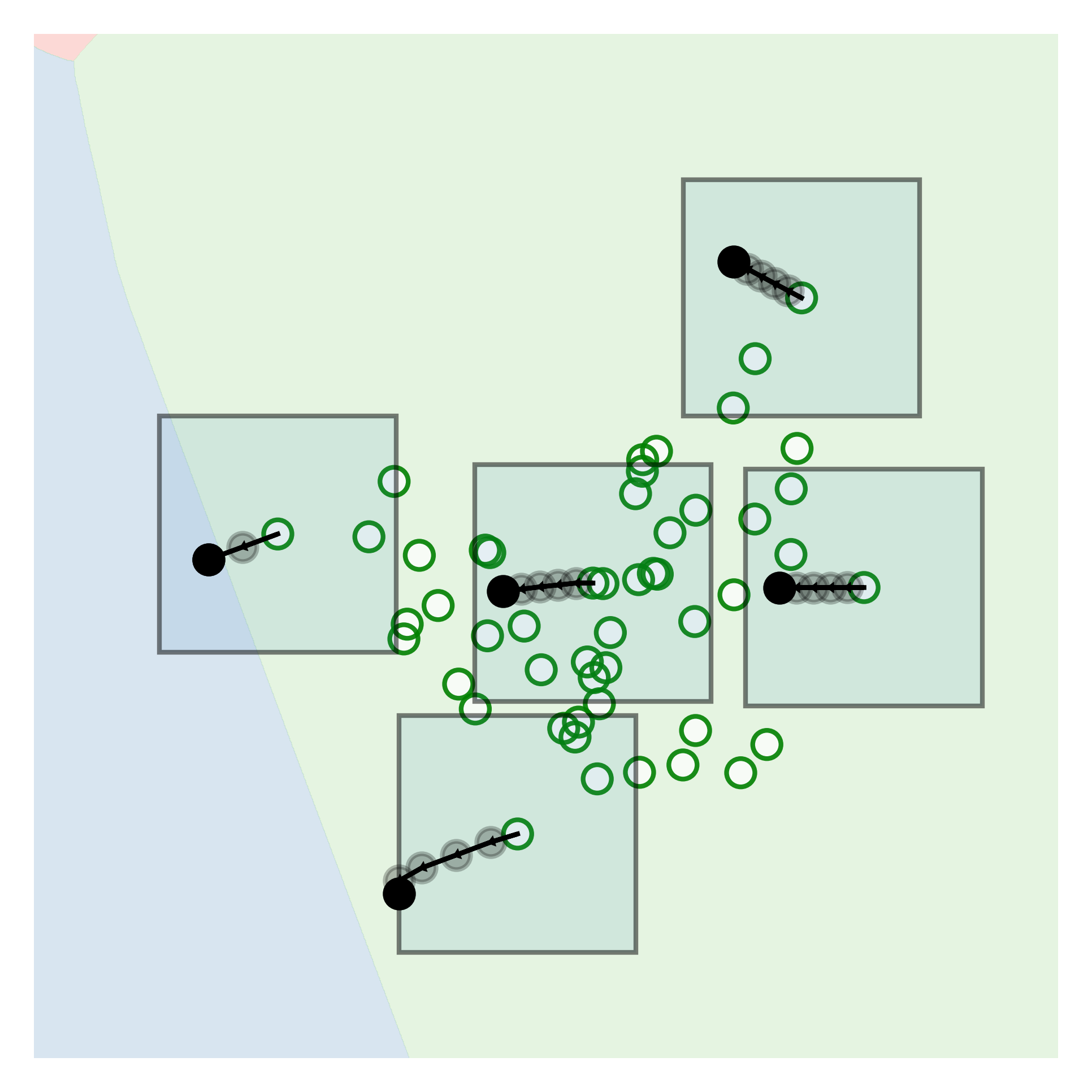}
    \caption{Enlarged Fig. \ref{fig:toy2}}
    \label{fig:toy3}
  \end{subfigure}
    \begin{subfigure}[b]{0.26\linewidth}
    \includegraphics[width=\linewidth]{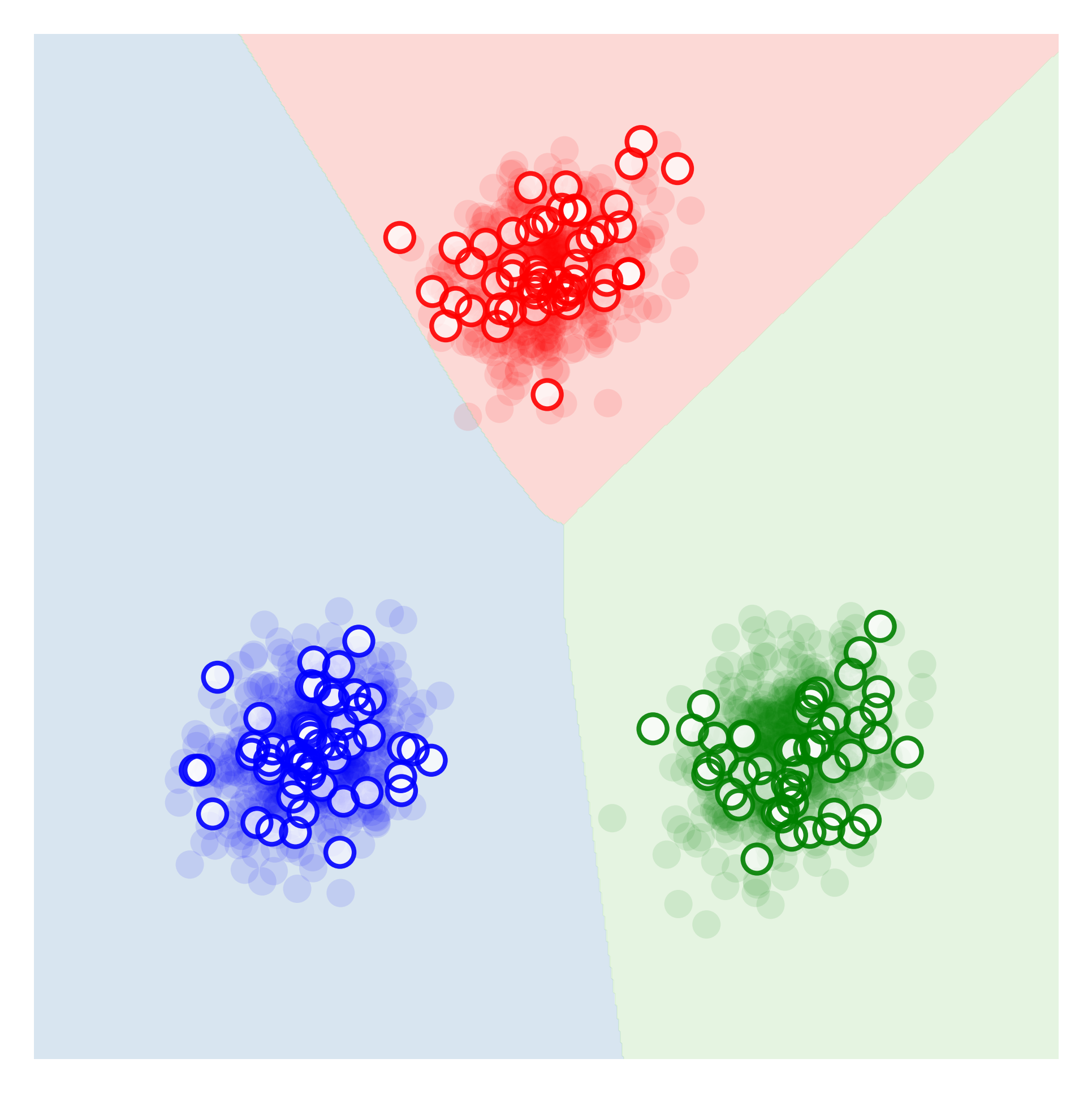}
    \caption{Avoided critical errors}
    \label{fig:toy4}
  \end{subfigure}
    \begin{subfigure}[b]{0.26\linewidth}
    \includegraphics[width=\linewidth]{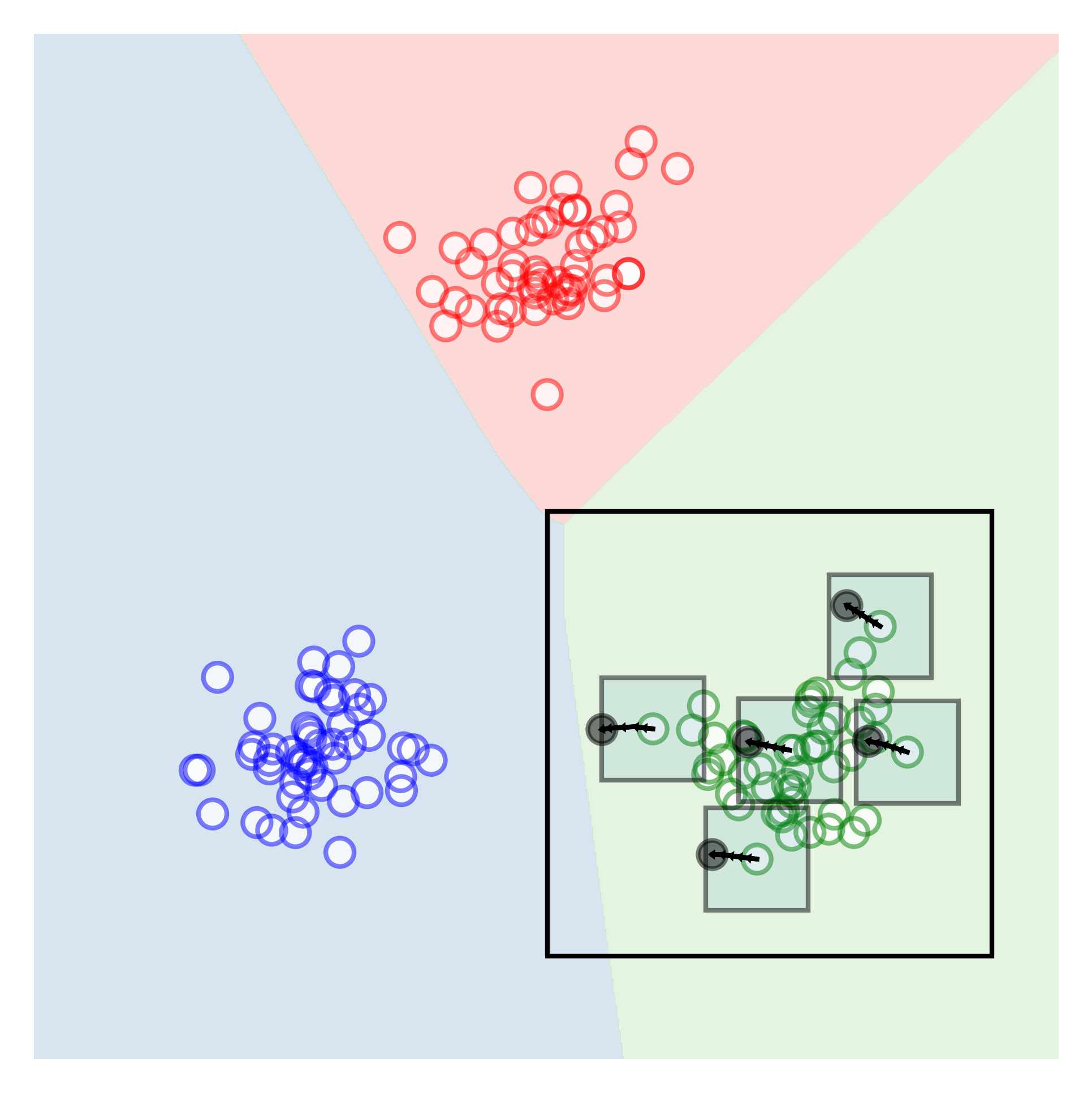}
    \caption{Adversarial examples (after)}
    \label{fig:toy5}
  \end{subfigure}
    \begin{subfigure}[b]{0.26\linewidth}
    \includegraphics[width=\linewidth]{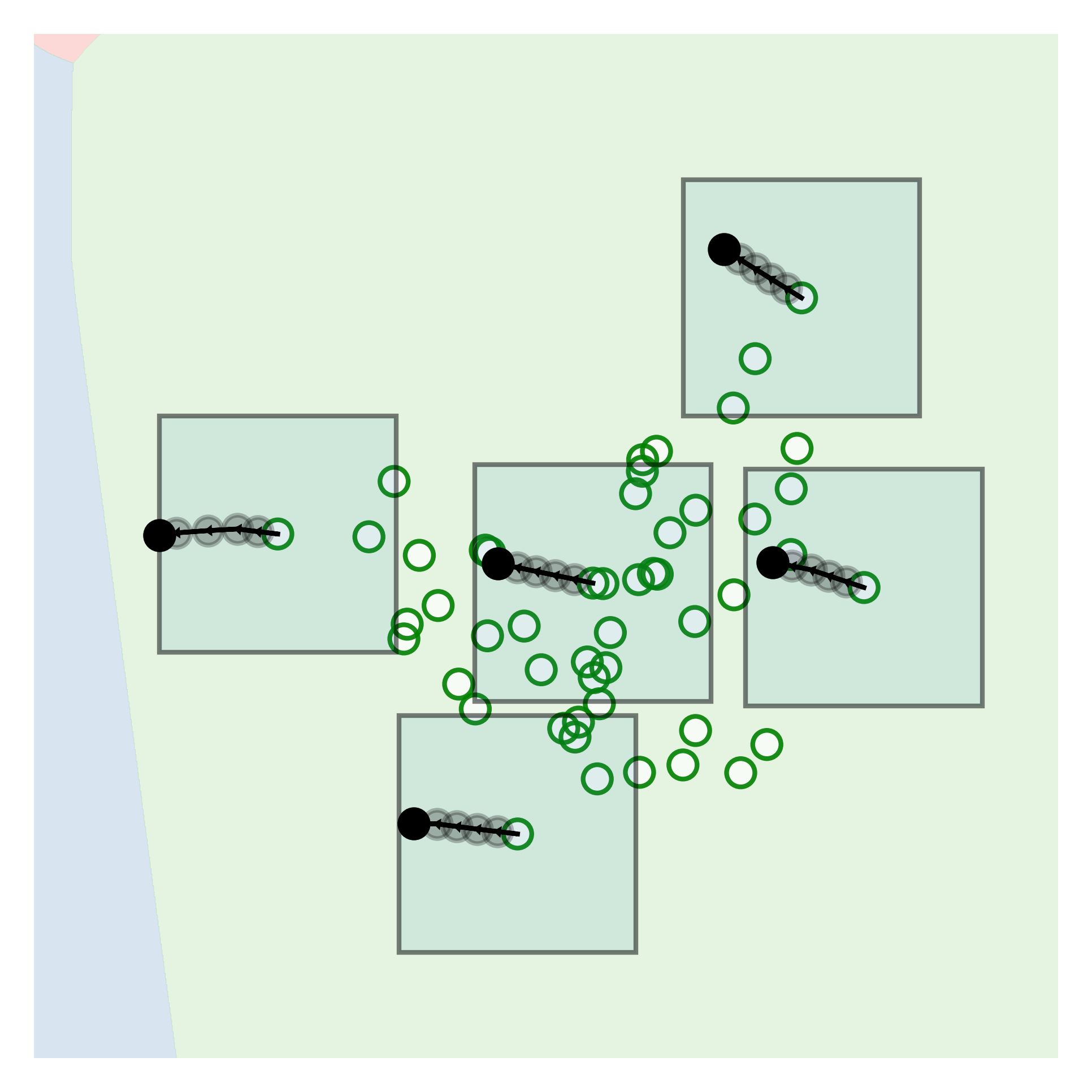}
    \caption{Enlarged Fig. \ref{fig:toy5}}
    \label{fig:toy6}
  \end{subfigure}
  \caption{Before (\ref{fig:toy1},\ref{fig:toy2},\ref{fig:toy3}) and After (\ref{fig:toy4},\ref{fig:toy5},\ref{fig:toy6}) CSADA}
  \label{fig:toy7}
\end{figure*}

A toy classification problem is presented in this section to demonstrate our model's ability to prevent critical errors. Three classes are generated from independent two-dimensional Gaussian distributions, labeled in red, blue, and green (see Table \ref{table:Toy_Data} and Fig. \ref{fig:toy1}). We define the cost from green to blue to be one and all others to be zero (see Table \ref{table:Toy_Cost}) and set $\tau=1$. As such, only misclassifying green with blue incurs a cost. For each class, 50 points (shown in hollow circles) are sampled for training, and 500 points are sampled for testing (samples shown in solid translucent circles). For this classification task, we train a multi-layer perception (MLP) with 50 nodes. We first optimize the vanilla MLP without augmentation using gradient descent. At convergence, the decision boundaries are shown in Fig.  \ref{fig:toy1}. One can directly realize that the model successfully reaches $100\%$ accuracy on the training samples. However, the decision boundary between the green and the blue class leads to multiple critical errors on the test set (labeled in green crosses).

\begin{table}[htb]
\centering
\resizebox{\columnwidth}{!}{%
\begin{tabular}{cccccc}
\hline
Class & Mean                & Covariance  & Train Samples & Test Samples \\ \hline
Red & $[0,8]^\top$ & \multirow{3}{*}{$\begin{bmatrix} 2&0.5 \\ 0.5&2 \end{bmatrix}$}  & 50 & 500 \\
Green & $[7,-6]^\top$  &           & 50            & 500         \\
Blue  & $[-7,-6]^\top$ &           & 50            & 500         \\ \hline
\end{tabular}
}
\caption{Toy Example Data Setting}
\label{table:Toy_Data}
\end{table}

\begin{table}[htb]
\centering
\begin{tabular}{c|ccc}
\hline
      & Red & Green & Blue \\ \hline
Red   & 0   & 0     & 0    \\
Green & 0   & 0     & 1    \\
Blue  & 0   & 0     & 0    \\ \hline
\end{tabular}
\caption{Toy Example Cost Setting}
\label{table:Toy_Cost}
\end{table}

Using the approach detailed in Algorithm \ref{alg:min}, we show in Fig. \ref{fig:toy2} the trajectories of the adversarial attacks on five different points. The large box in black is enlarged in Fig. \ref{fig:toy3} for better visualization. These trajectories highlight the idea behind our augmentation scheme. The generated adversarial examples pushed the green data points towards the boundary between green and blue regions. The grey points are the intermediate gradient ascent steps, whereas the black points are the outputted adversarial examples. The adversarial attacks either stop when the points go across the classification boundary or when the designated number of maximization steps is reached. Boxes surrounding each point denotes the norm constraint $\|\delta\| \leq \epsilon$. Here we use infinity norm,  $\epsilon=1.5$, $K=5$, and $\eta_2=0.05$.

The results of our trained model are shown in Fig. \ref{fig:toy4}. After incorporating adversarial samples, the decision boundary between the green and blue was shifted towards the blue class, while the other boundaries were mostly unaffected. Interestingly, our updated model does not have any critical errors on the test set. As shown in Figs.\ref{fig:toy5} and \ref{fig:toy6}, at convergence, the boundary between green and blue is far enough from the green class so that no more successful attacks (attacks that go across the boundary) can be achieved within the given number of maximization steps (i.e., budget). 

\section{Experiments} \label{sec:experiments}
\subsection{The Failure of Simple Reweighting}
\begin{figure*}[htb]
  \centering
  \begin{subfigure}[b]{0.3\linewidth}
    \includegraphics[width=\linewidth]{figures/MNIST_Baseline.png}
    \caption{Baseline model}
    \label{fig:MNIST_Before}
  \end{subfigure}
  \begin{subfigure}[b]{0.3\linewidth}
    \includegraphics[width=\linewidth]{figures/MNIST_Penalty.png}
    \caption{Penalty method}
    \label{fig:MNIST_Penalty}
  \end{subfigure}
  \begin{subfigure}[b]{0.3\linewidth}
    \includegraphics[width=\linewidth]{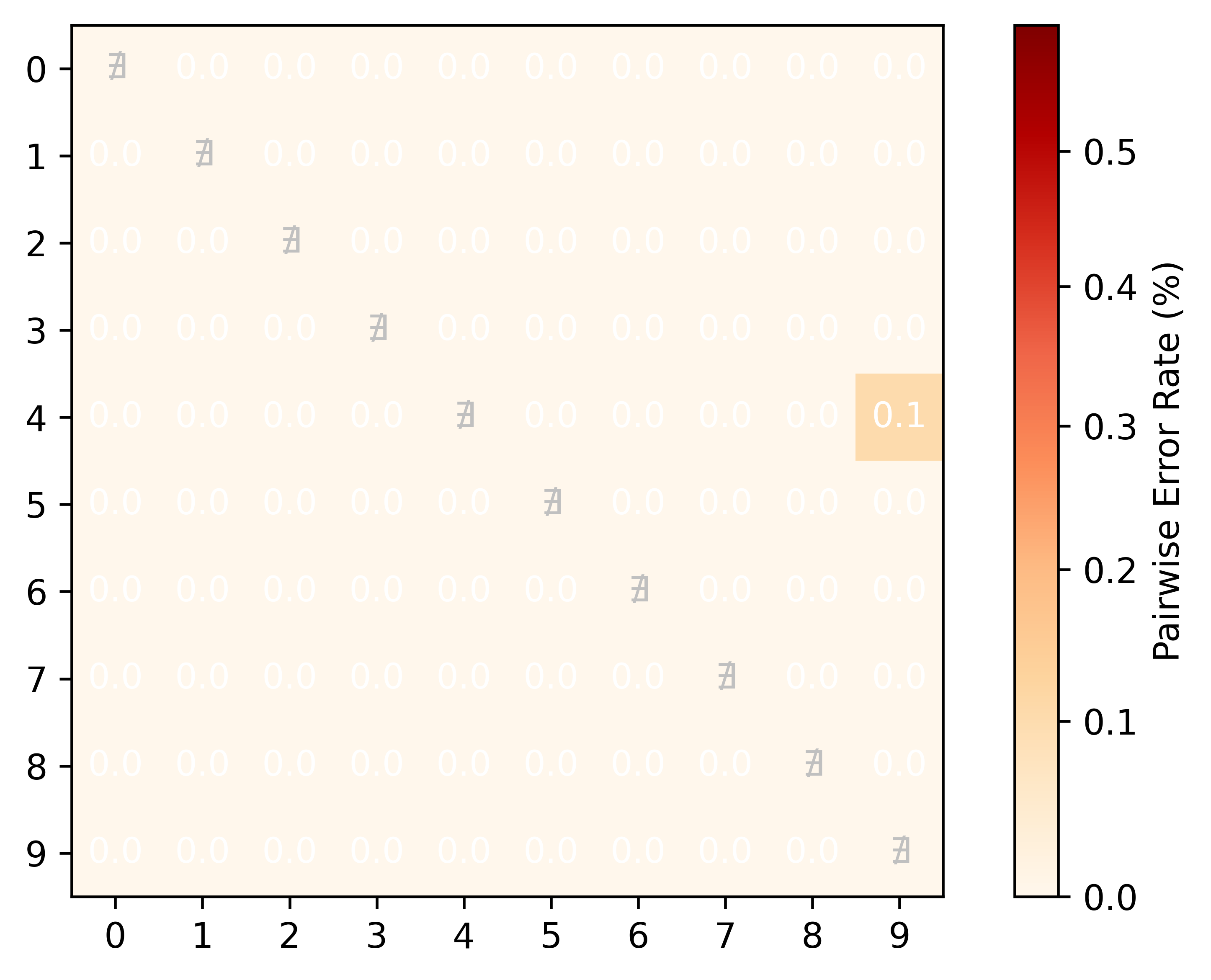}
    \caption{CSADA}
    \label{fig:MNIST_After}
  \end{subfigure}
  \caption{Comparison of Baseline, Penalty, CSADA models on MNIST}
  \label{fig:MNIST}
\end{figure*}
\begin{figure*}[htb]
  \centering
  \begin{subfigure}[b]{0.32\linewidth}
    \includegraphics[width=\linewidth]{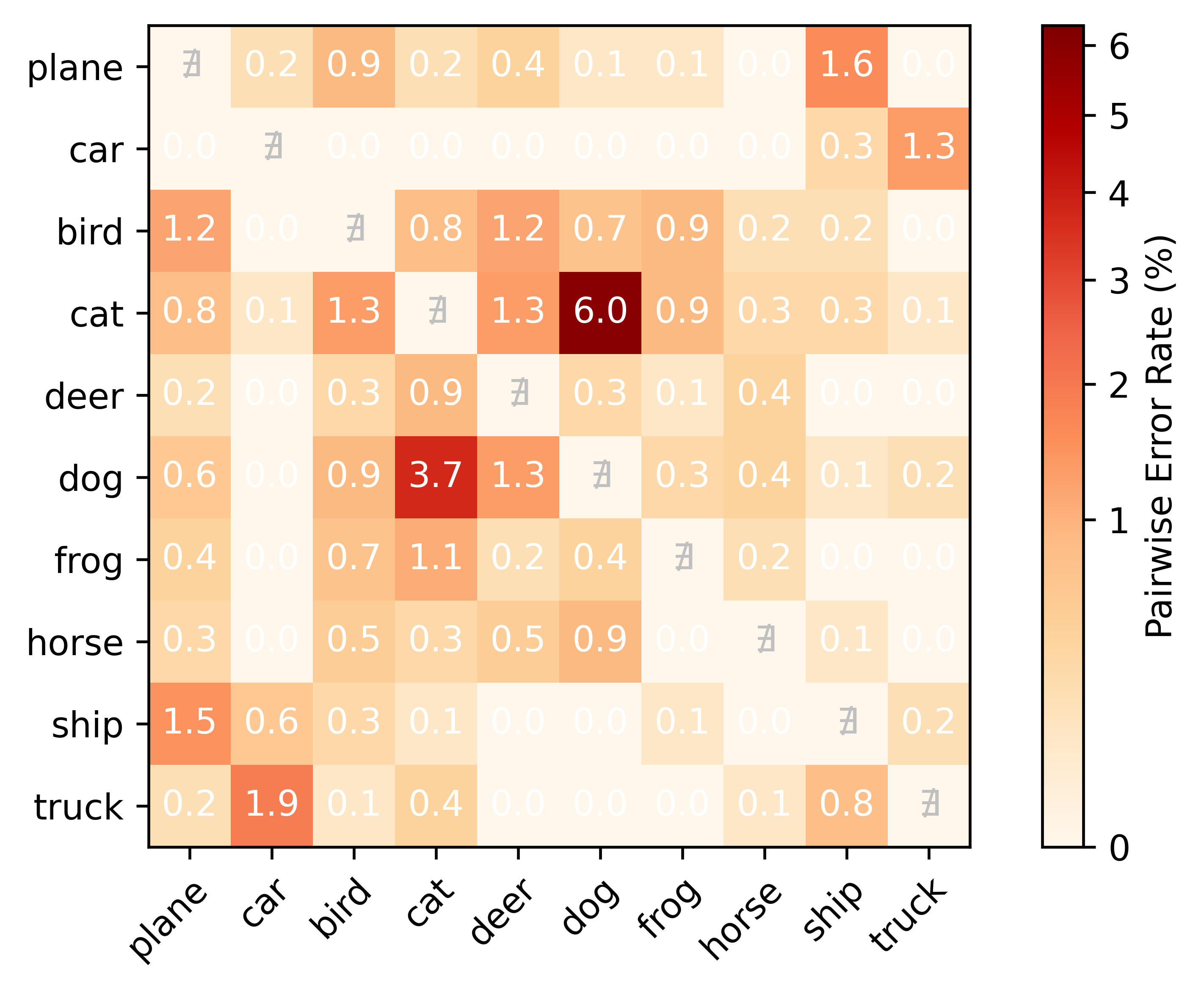}
    \caption{Baseline model}
    \label{fig:CIFAR_Before}
  \end{subfigure}
  \begin{subfigure}[b]{0.32\linewidth}
    \includegraphics[width=\linewidth]{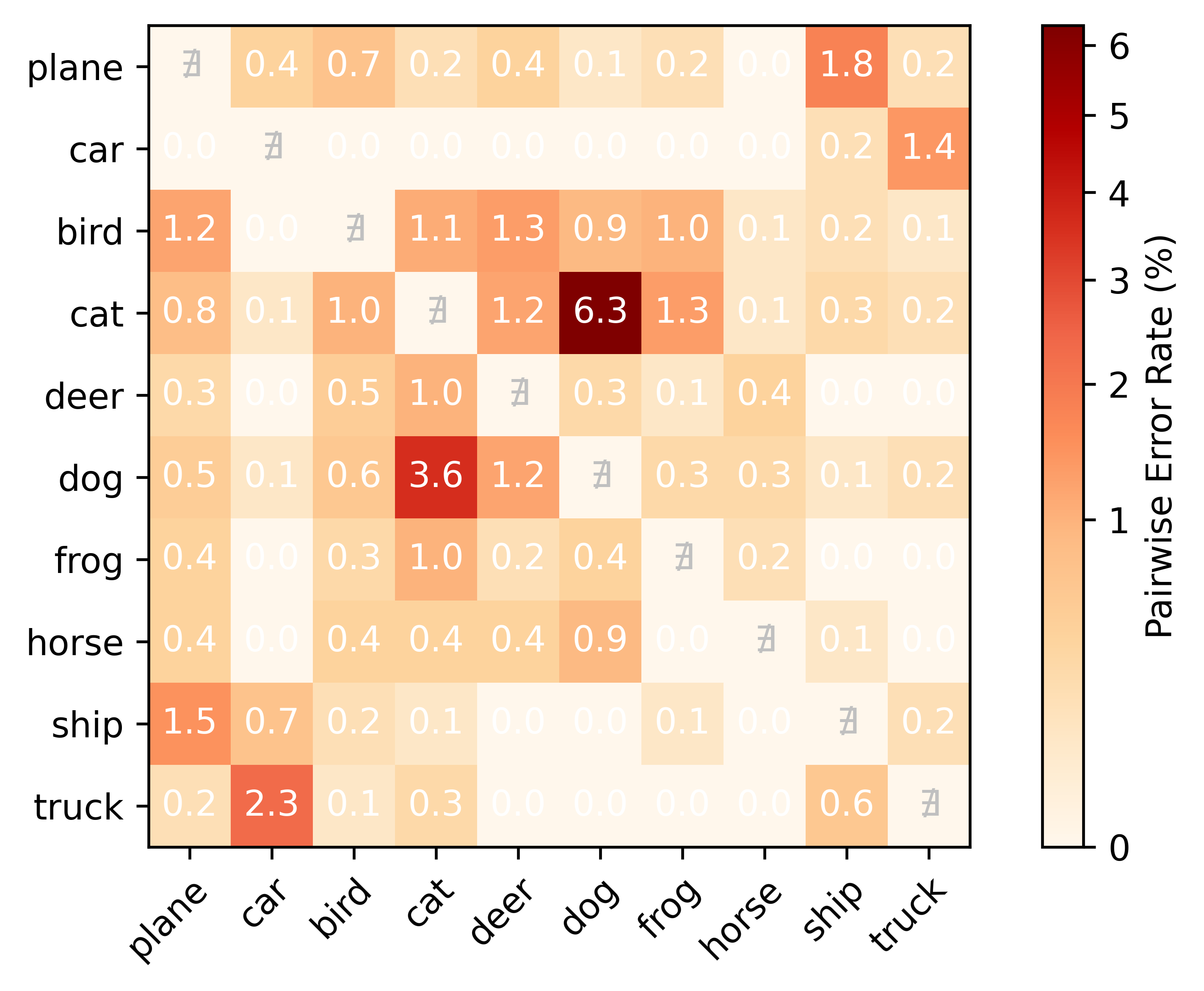}
    \caption{Penalty method}
    \label{fig:CIFAR_Penalty}
  \end{subfigure}
  \begin{subfigure}[b]{0.32\linewidth}
    \includegraphics[width=\linewidth]{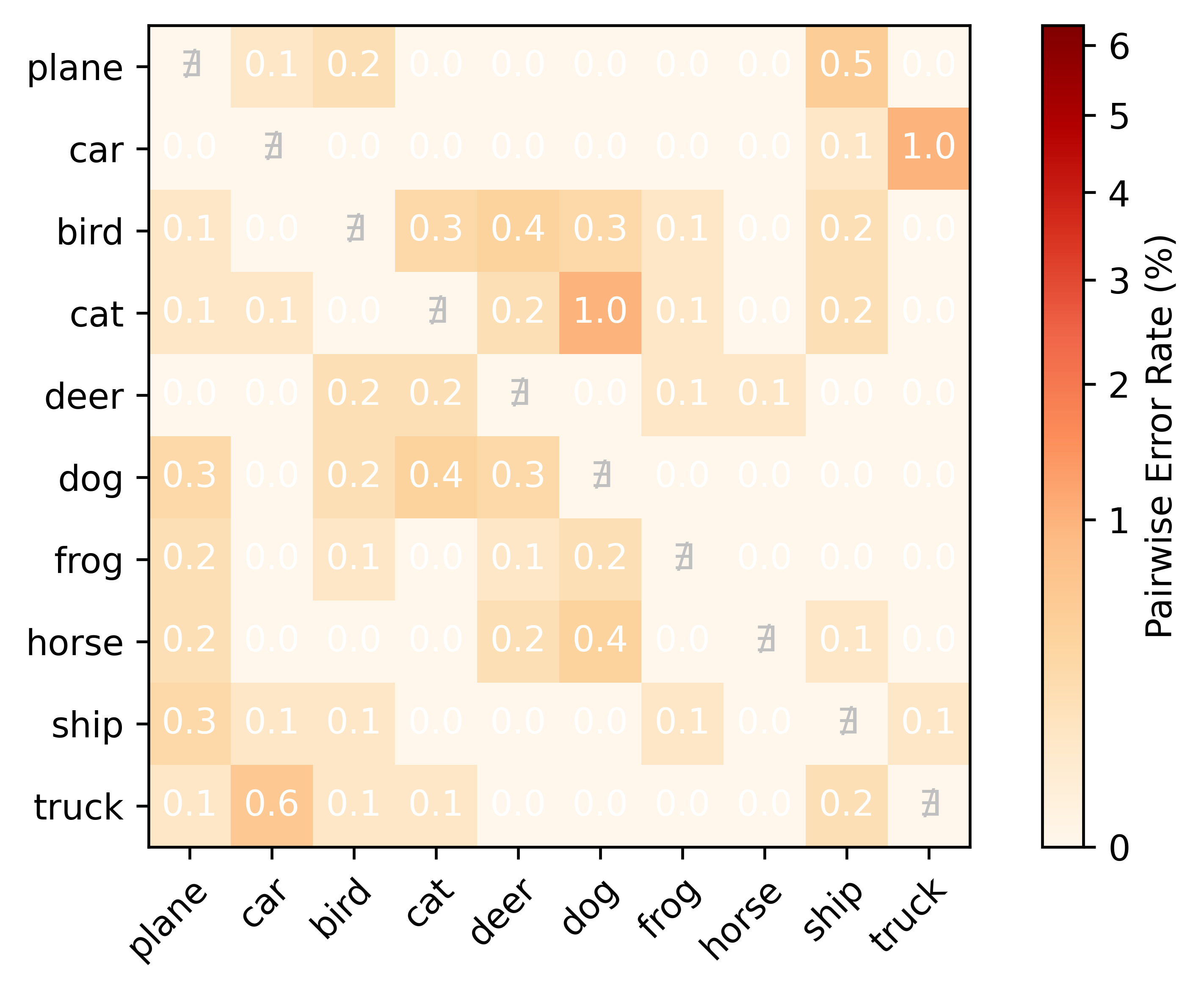}
    \caption{CSADA}
    \label{fig:CIFAR_After}
  \end{subfigure}
    \caption{Comparison of Baseline, Penalty, CSADA Models on CIFAR-10}
  \label{fig:CIFAR}
\end{figure*}
Here we revisit the example in Sec. \ref{sec:example} using our approach. We present results on both CIFAR-10 and MNIST. We train a ResNet-34 \cite{he2016deep} DNN for 300 epochs under cross-entropy loss using stochastic gradient descent (SGD) with momentum $0.9$ and weight decay $10^{-4}$. The learning rate for SGD starts with $0.1$ and decays by $0.33$ every $50$ epochs. We also incorporate standard data augmentation practices when learning these datasets as in \cite{he2016deep}. We denote this model as the Baseline model as we only aim to minimize the empirical risk ($\frac{1}{N}\cdot\sum_{i=1}^N~\left[\ell\left(f(\theta,x_i),y_i\right)\right]$). For both datasets, the Baseline model reaches a training loss of less than $0.001$. 

Now starting with the pretrained model and for every possible pair $(y',z')$ between the 10 classes, we train the penalty method in \eqref{eq:extreme} and our adversarial augmentation model. We assume that predicting $y'$ as $z'$ incurs a cost of one and all other costs are zero. All models are trained for $10$ epochs with a learning rate of $\eta_1 = 10^{-4}$. For CSADA, in CIFAR-10, the hyperparameters are $\eta_2=0.001$, $K=5$, $\epsilon=1$, $\tau=1$, $\lambda=10$. In MNIST, the hyperparameters are $\eta_2=0.05$, $K=5$, $\epsilon=10$, $\tau=1$, $\lambda=10$. All experiments in this section are done using the stochastic version of CSADA in Algorithm \ref{alg:stoch_min}.

The results on both MNIST and CIFAR-10 are shown in Figs. \ref{fig:MNIST} and \ref{fig:CIFAR}. In the figures, each entry of the matrix corresponds to the pairwise error rate of the experiment done with this pair being the critical one. From the results, we can obtain two key insights: i) even with an objective that only aims to reduce critical errors, the results did not improve compared to the Baseline. This again asserts the motivation of the paper and the need to rethink cost-sensitive learning in over-parametrized models. ii) we see that our alternative approach via CSADA results in a significant reduction in the pairwise error rate showing the superior performance of our model. 

\subsection{Cost-sensitive Training on CIFAR-10 and MNIST}

Using the same Baseline trained above, we perform sensitivity analysis on our method and compare the results with other benchmarks. We generate a cost matrix $\mathcal{C}$ where $c(y,z) \sim \text{Pareto(1,1.5)}$. We choose a Pareto distribution to emulate real-life situations such as pharmacy medical dispensing, where some mistakes are life-threatening and far more costly than others.

We start with sensitivity analysis on $\lambda$ using CIFAR-10. Starting with the Baseline model, for each $\lambda$, we trained CSADA for $10$ epochs at a fixed learning rate $\eta_1$ of $5\times 10^{-7}$. The temperature $\tau$ is set to be $3$ in this experiment. Notice that a small learning rate was chosen since we are refining the decision boundaries rather than finding a completely different solution from the Baseline. We use the infinity norm on $\delta$, and set $\epsilon=1$, $K=10$, $\eta_2 = 5\times 10^{-4}$, $\tau=3$. The experiment on each $\lambda$ is replicated for $3$ times to offset randomness. 

Testing results are shown in Fig. \ref{fig:Cost_vs_lmd} where we report the weighted error rate (WER). The weighted error rate finds the average cost per prediction and is given as:
\[\frac{1}{N_{test}}\cdot \sum_{i} c\left(y_i,\argmax_{z\in {\cal Y}}p_z(\theta; x_i)\right).\]
\noindent Notice that when costs are all equal to $1$, WER recovers the overall accuracy.

Fig. \ref{fig:Cost_vs_lmd} provides interesting insights. First, it is important to notice that when $\lambda=0$, we recover the Baseline model, which does not induce any cost-sensitivity. Second, as $\lambda$ increases, the cost first decreases but then increases beyond the Baseline model. This is intuitively understandable since the focus is mainly shifted to adversarial samples when $\lambda$ is very large. This can prevent the model from learning important representations on the natural dataset. Finally, and most importantly, it can be seen from the figure that our approach can indeed significantly reduce the cost when an appropriate $\lambda$ is chosen.  

\begin{figure}[H]
  \centering
  \includegraphics[width=0.65\linewidth]{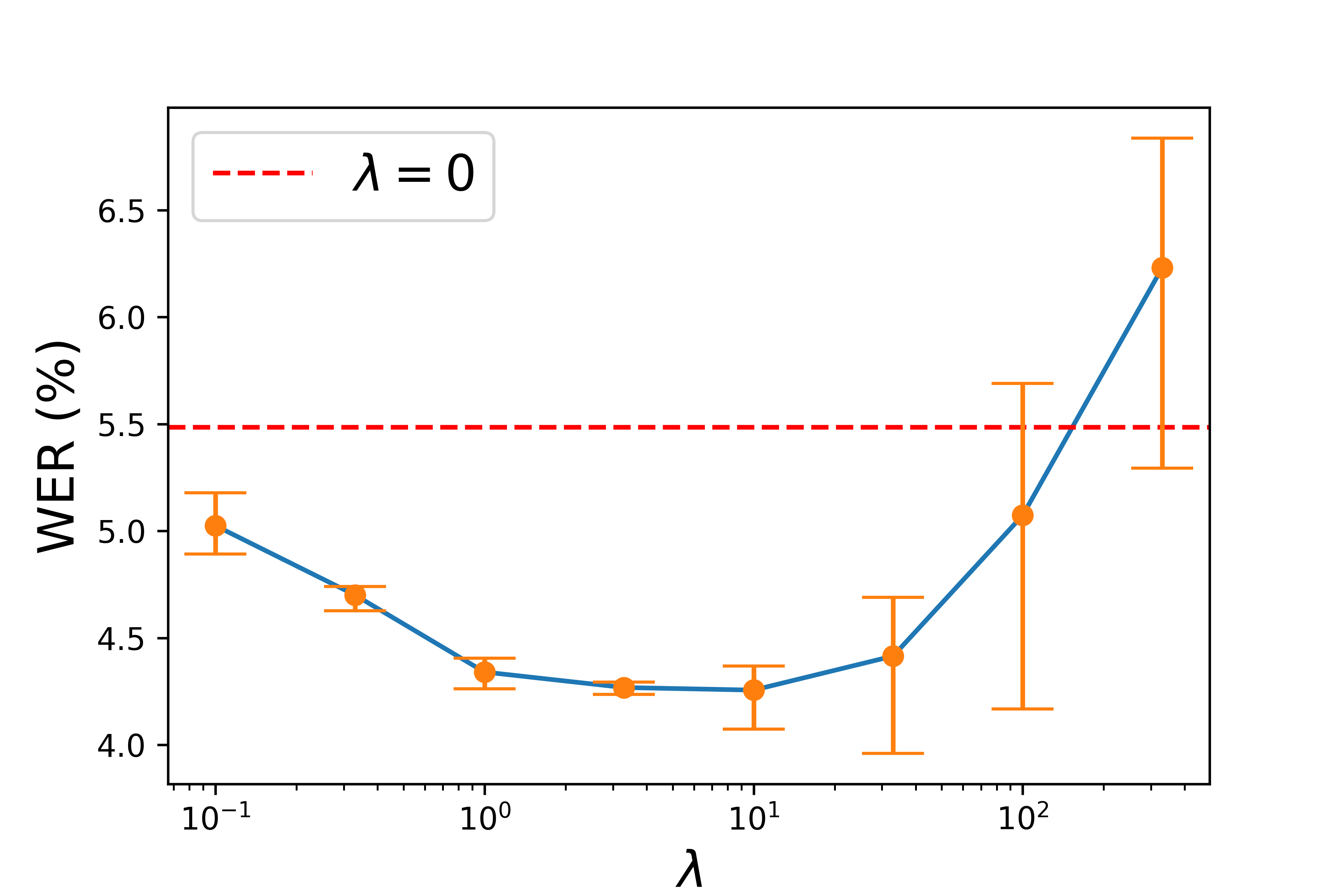}
  \caption{Cost Changes in Response to Hyperparameter $\lambda$}
  \label{fig:Cost_vs_lmd}
\end{figure}

We also test model convergence on CIFAR-10 in Fig.  \ref{fig:CIFAR_Converge}. Based on Fig. \ref{fig:Cost_vs_lmd} we set $\lambda=2$. Similarly, the model is run for $10$ epochs, and we replicate the experiment $10$ times. In Fig. \ref{fig:CIFAR_Converge} we report the mean and 90\% prediction intervals for our loss function ($\ell_{stochastic}$) and the WER. The results again show a significant decrease in both loss and cost at the end of training. 

\begin{figure}[H]
  \centering
  \begin{subfigure}[b]{0.49\linewidth}
    \includegraphics[width=\linewidth]{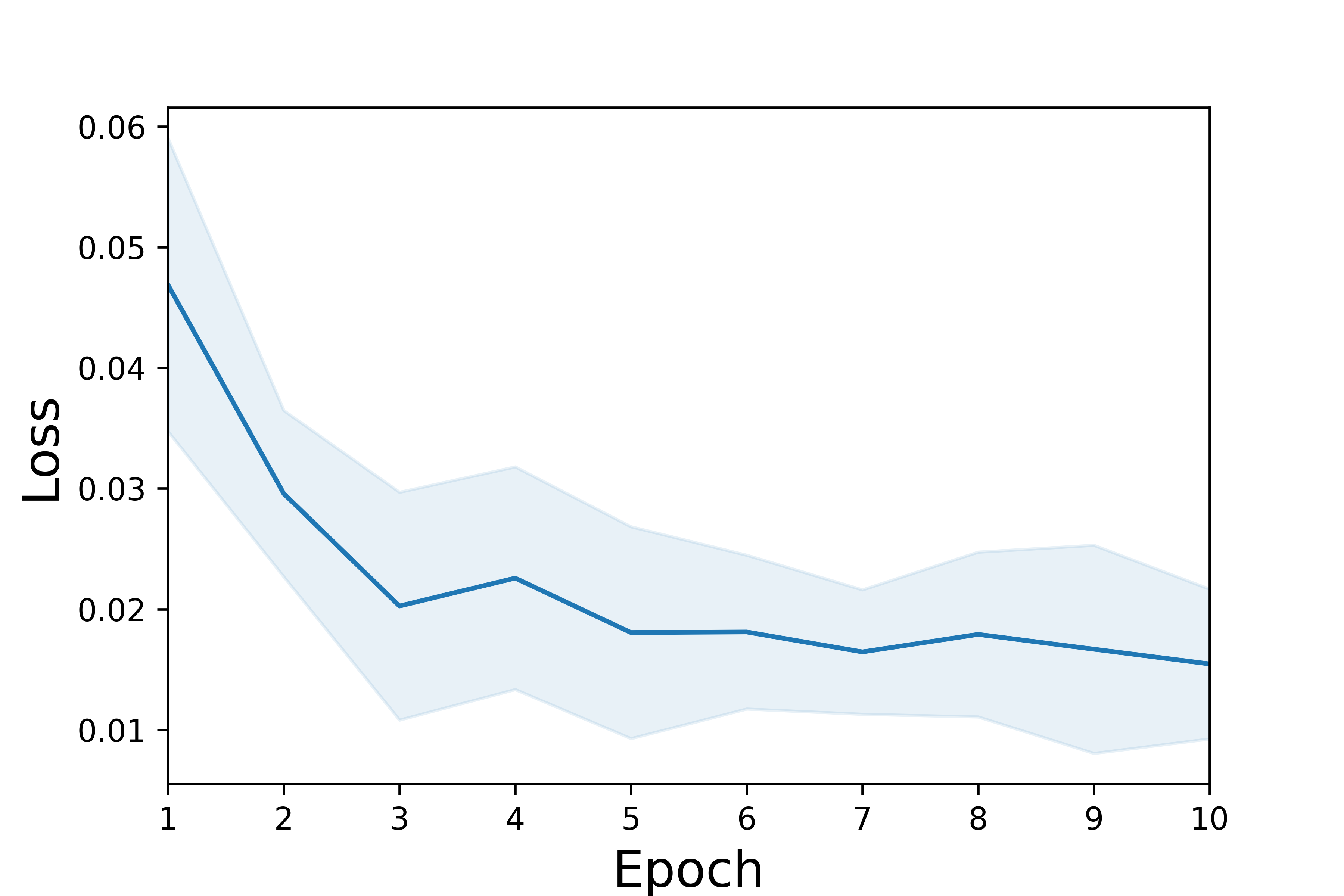}
    \caption{Loss (Training)}
  \end{subfigure}
  \begin{subfigure}[b]{0.49\linewidth}
    \includegraphics[width=\linewidth]{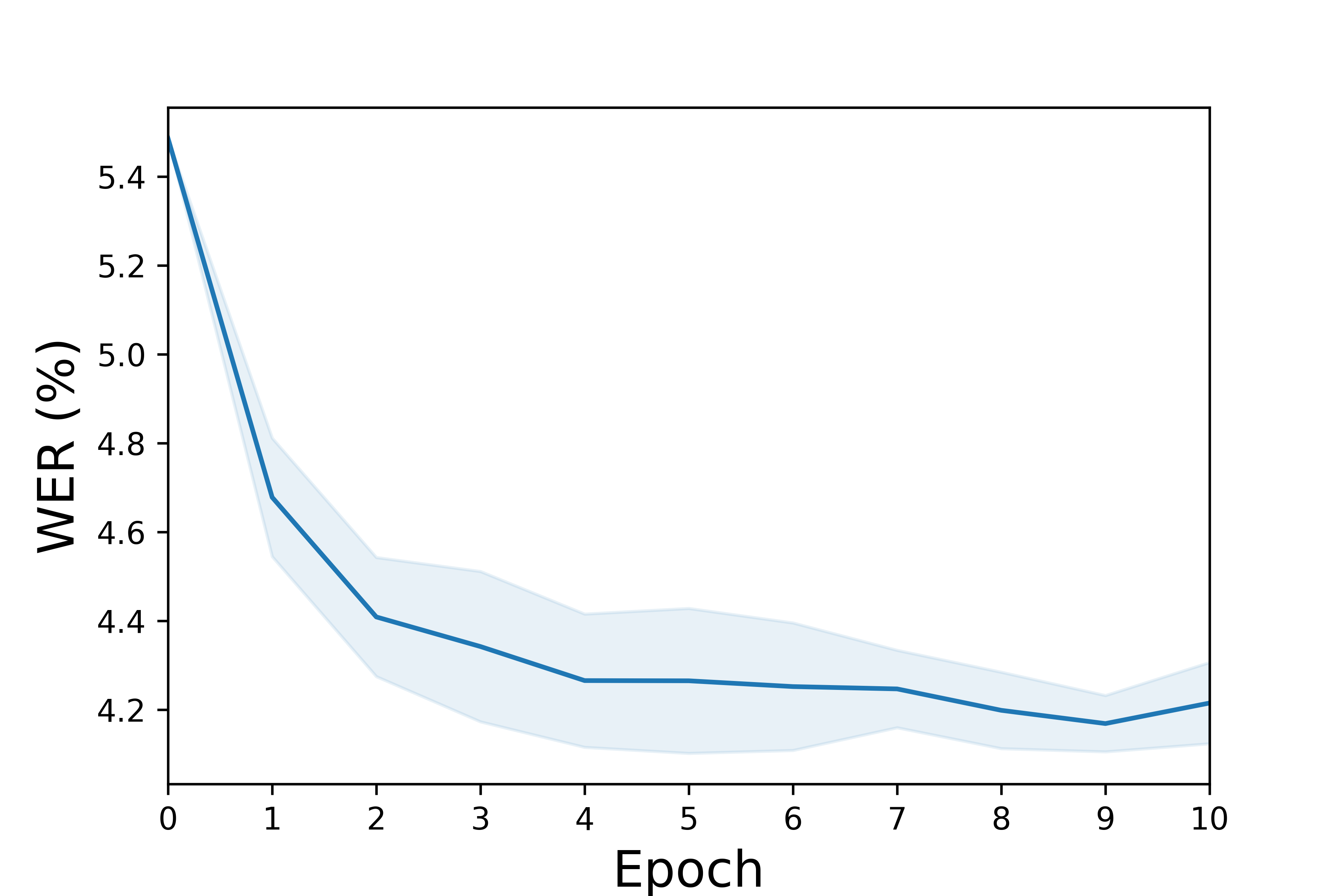}
    \caption{WER (Testing)}
  \end{subfigure}
  \caption{Convergence of Algorithm \ref{alg:stoch_min} on CIFAR-10}
  \label{fig:CIFAR_Converge}
\end{figure}

Next we compare our approach with two benchmarks: (i) $\text{CNN}_{\text{SOSR}}$ proposed in \cite{chung2015cost} and (ii) Adjusted Penalty method (AP) which uses misclassification cost regularization term to penalize critical errors. More specifically the loss function is given as:
\begin{align*}
\min_{\theta} -\frac{1}{N} \cdot \sum_{i=1}^N \Big(&\log(p_{y_i}(\theta,x_i)) \\
&+ \alpha \cdot \sum_{z\in \mathcal{Y}} c(y_i,z) \cdot  \log(1-p_z(\theta,x_i))\Big)
\end{align*}

For all methods, we use ResNet-34 and start from the pre-trained Baseline. Note that for $\text{CNN}_{\text{SOSR}}$ we add a required regression layer. Settings for our model are kept the same. For the benchmarks, we found more epochs are needed for better performance. As such, on the two benchmarks, we use a starting learning rate of $10^{-4}$ that decays by $0.33$ every $10$ epochs. $\alpha$ is set to 5. We followed the method in \cite{chung2015cost} by pretraining only on the cross-entropy loss, so there are no tuning hyperparameters in $\text{CNN}_{\text{SOSR}}$. The mean and standard deviation of WER are shown in Table \ref{table:CIFAR_Comparison}. On CIFAR-10, it can be seen that our algorithm has the lowest cost among the three. Overall, our algorithm requires fewer training epochs, has a lower cost, and does not require additional layers in NNs.

\begin{table}[htbp]
\centering
\resizebox{\columnwidth}{!}{%
\begin{tabular}{ccccc}
\hline
                               & Baseline & CSADA                 & AP     & SOSR        \\ \hline
WER (\%) & 5.49       & \textbf{4.20(0.03)} & 5.07(2.06)  & 5.12(2.08)  \\
Top 1 Cost Pair Error (\%)   & 1.50       & 0.83(0.08)          & 1.20(0.07)  & 1.14(0.16)  \\
Top 1 Cost Pair Error (\%)   & 1.20       & 0.81(0.08)          & 1.06(0.08)  & 1.10(0.29)  \\
Top 1 Cost Pair Error (\%)   & 6.00       & 3.21(0.16)          & 5.08(0.18)  & 5.04(0.91)  \\
Overall   Accuracy (\%)        & 95.70      & 95.54(0.05)         & 95.58(0.15) & 95.53(0.10) \\ \hline
\end{tabular}%
}
\caption{Comparison of Methods on CIFAR-10 Dataset}
\label{table:CIFAR_Comparison}
\end{table}

A similar comparison is further made for the MNIST dataset. Each method was replicated $5$ times. Here, the $\epsilon$ is relaxed to $10$, $K=10$ and $\eta_2=0.05$. Other settings remain the same as in the CIFAR-10. As shown in Table \ref{table:MNIST_Comparison}, competing methods can only slightly reduce costs. In contrast, our approach was able to reduce cost while maintaining similar accuracy to the Baseline model.

\begin{table}[H]
\centering
\resizebox{\columnwidth}{!}{%
\begin{tabular}{ccccc}
\hline
                           & Baseline & CSADA                  & AP         & SOSR            \\ \hline
WER (\%)                   & 0.46   & \textbf{0.25(0.01)} & 0.44(0.01)  & 0.43(0.06)  \\
Top 1 Cost Pair Error (\%) & 0.10   & 0.00(0.00)          & 0.10(0.00)  & 0.10(0.00)  \\
Top 2 Cost Pair Error (\%) & 0.59   & 0.10(0.00)          & 0.54(0.05)  & 0.50(0.15)  \\
Top 3 Cost Pair Error (\%) & 0.00   & 0.00(0.00)          & 0.00(0.00)  & 0.00(0.00)  \\
Overall Accuracy (\%)      & 99.67  & 99.62(0.01)         & 99.66(0.01) & 99.64(0.02) \\ \hline
\end{tabular}%
}
\caption{Comparison of Methods on MNIST Dataset}
\label{table:MNIST_Comparison}
\end{table}

\subsection{Pharmacy Medication Image (PMI) Dataset}

\subsubsection{ Medication Dispensing Errors Overview}
When medications are dispensed, they must match the prescription issued by the physician.  Failing to dispense correct medications can lead to serious medical consequences.  If a computer model is able to recognize the pill inside the medication bottle and confirm it matches the product written on the prescription, dispensing errors can be minimized.  For this case study, the classification model takes pill images as input and predicts the medication product. However, in using a computer model for supporting medication dispensing, the costs can be different across different pairs of  pills. For example, classifying the prescribed Amiodarone Hydrochloride 200 MG Oral Tablet (i.e., a medication used to control a person's heart rate) when the true label is Allopurinol 100 MG Oral Tablet (i.e., a medication used to prevent gout attacks) can result in a patient going untreated for gout or can cause 
toxicity of the lungs \cite{amiodarone}. Other prediction errors, such as confusing one manufacturer for a second manufacturer for an acid reflux medication - Ranitidine 150 MG Oral Tablet - is not likely to result in any harm to the patient.  In our study, we assigned higher costs to critical mistakes. The imbalanced cost nature of medication errors makes this a good application for cost-sensitive training.

\subsubsection{ PMI Dataset Description}
The dataset consists of a collection of pill images and a meta-table reporting their National Drug Codes (NDC). NDCs are unique product identifiers that are used to distinguish different medication products on the basis of ingredient, strength, dose form, and manufacturer.  The physical features of the pill for each NDC are a distinct  combination of shape, size, color, scoring, and imprint.  These pill images show the inside of filled medication bottles from a top-down view (see Fig. \ref{NLM_Image_Examples}).  The dataset includes 13955 images from 20 distinct NDCs. A list of the NDCs along with their key physical features is presented in Table \ref{table:NLM20_summary} in the Appendix. Sample sizes in different NDCs are imbalanced. For each NDC, images are randomly split into training/validation/testing sets at a ratio of approximately 6:2:2. The size of the input images is $1024\times 960$ and was center-cropped to $960\times 960$. The dataset was made publicly available at \url{https://deepblue.lib.umich.edu/data/concern/data_sets/6d56zw997} for reproducibility purposes and to encourage further research along this line. 

\begin{figure}[htbp!]
  \centering
  \begin{subfigure}[b]{0.24\linewidth}
    \includegraphics[width=\linewidth]{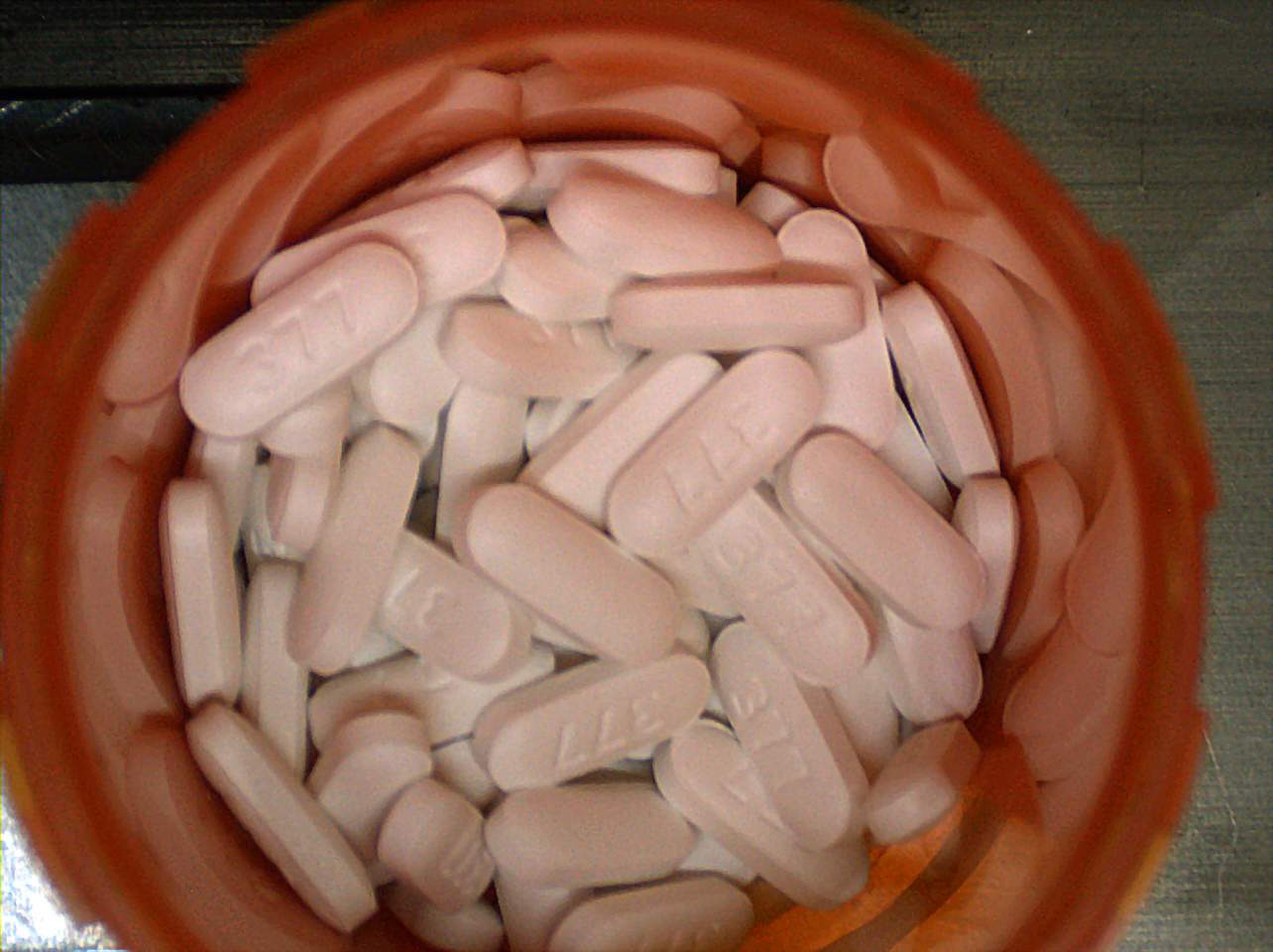}
    \caption{57664-0377}
  \end{subfigure}
  \begin{subfigure}[b]{0.24\linewidth}
    \includegraphics[width=\linewidth]{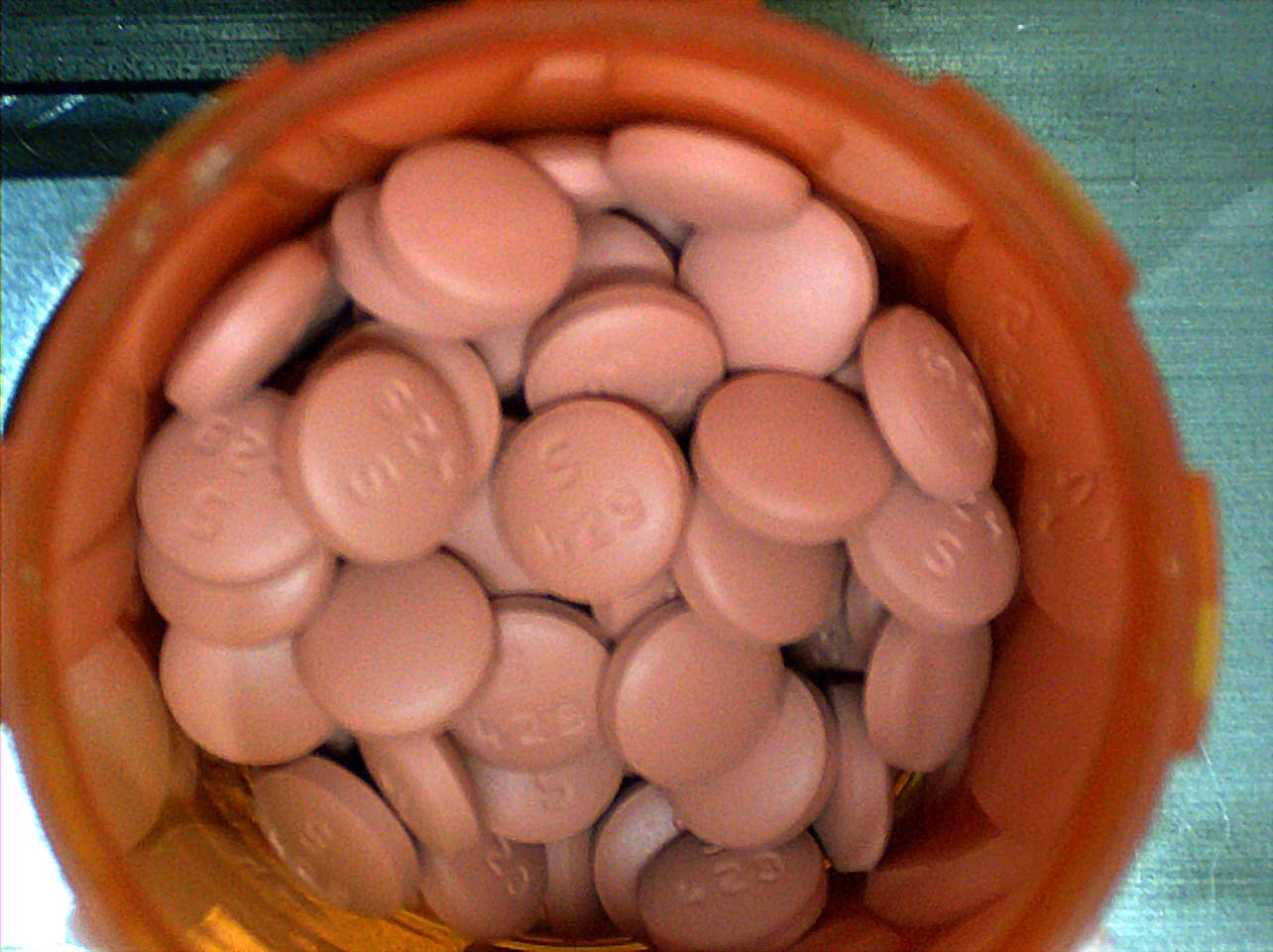}
    \caption{64380-0803}
  \end{subfigure}
  \begin{subfigure}[b]{0.24\linewidth}
    \includegraphics[width=\linewidth]{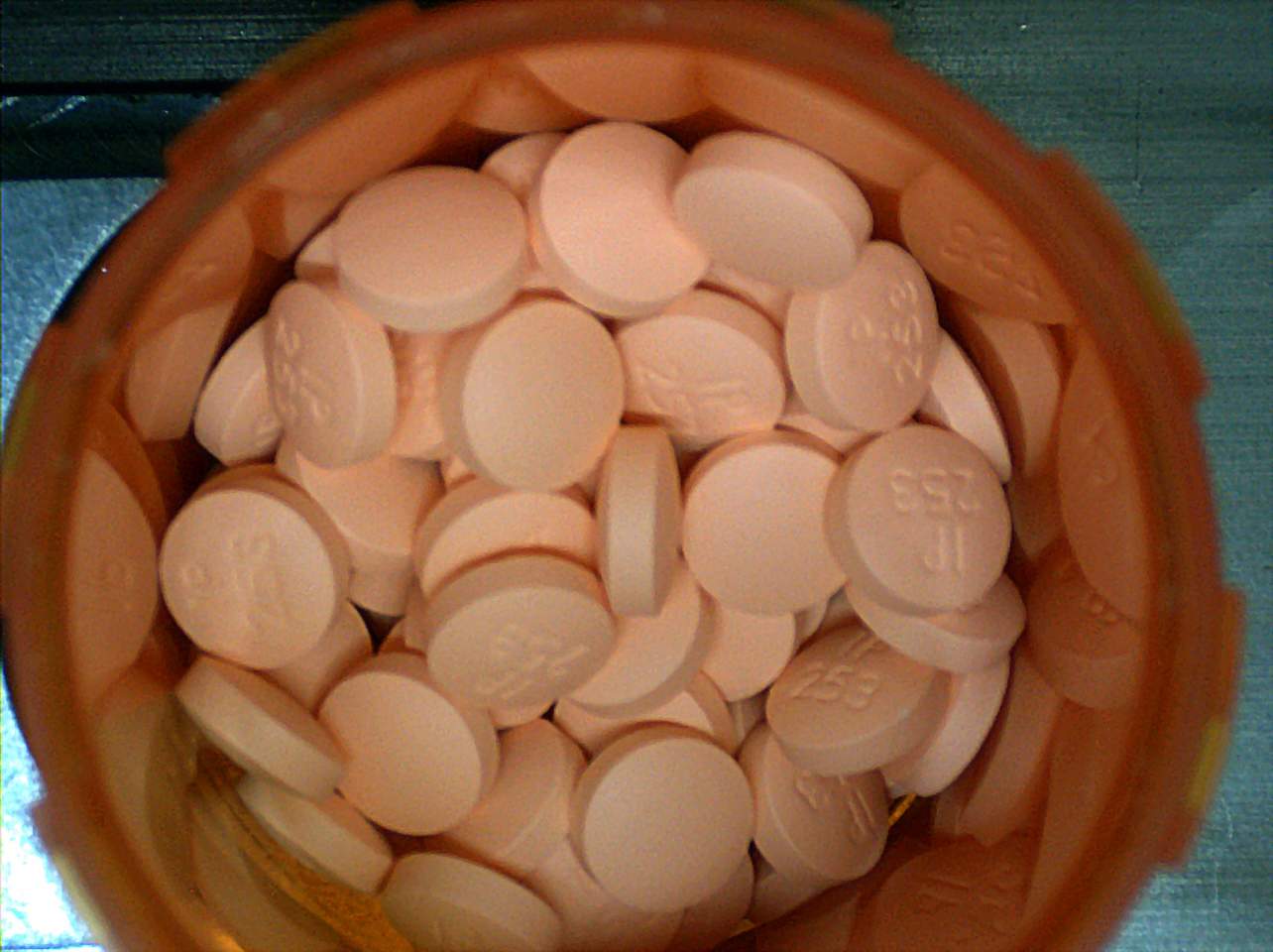}
    \caption{65162-0253}
  \end{subfigure}
    \begin{subfigure}[b]{0.24\linewidth}
    \includegraphics[width=\linewidth]{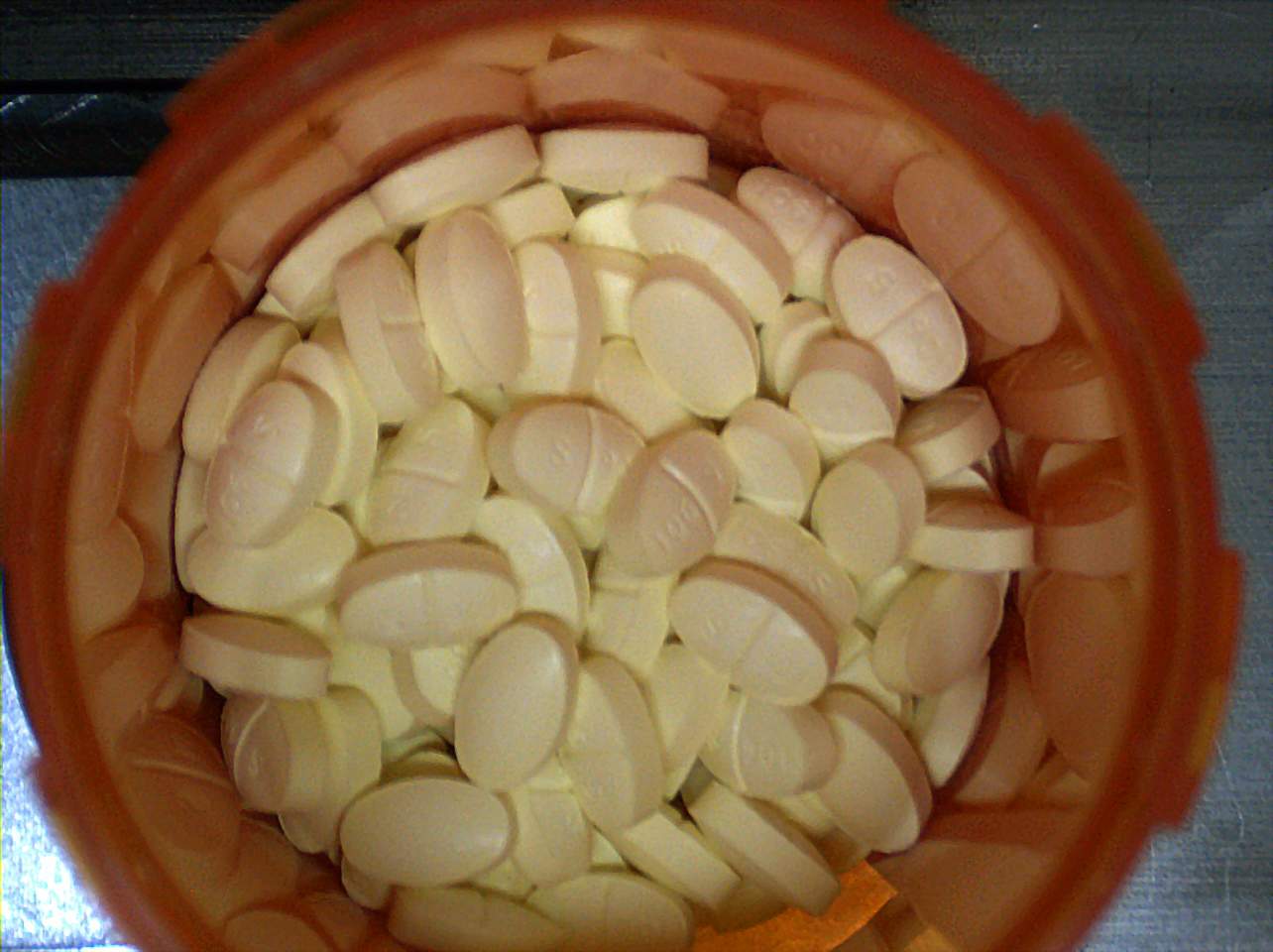}
    \caption{67253-0901}
  \end{subfigure}
  \caption{PMI Dataset Examples}
  \label{NLM_Image_Examples}
\end{figure}

\subsubsection{Baseline Model Training on PMI Dataset}
To train our Baseline model, we take the following steps for data augmentation. Whenever an image is sampled, it is first randomly cropped to a ratio that is generated randomly in the range of $(0.8,1)$. Then the image is resized to $224\times 224$. The resized images are rotated by a degree that is generated randomly from the tuple $(0,90,180,270)$. Finally, images are standardized with calculated mean and variance. We start with a pre-trained ResNet-34 model and train our vanilla Baseline model on the PMI Dataset. We first replace the last layer of the pre-trained model with randomly initialized weights. Freezing all but the last layer, the model is first trained for $10$ epochs with a fixed learning rate of $10^{-4}$. Then, all the layers were defrosted and trained for $20$ epochs with a learning rate of $10^{-5}$ for the first $10$ epochs and $10^{-6}$ for the rest. ADAM optimizer \cite{kingma2015adam} is used in this task. At the end of the training, the vanilla Baseline model achieved 99.61\% accuracy on the test set. However, despite the good overall accuracy, we observed examples of several critical errors in the validation set.

\subsubsection{ Cost-sensitive Training on PMI Dataset}
Expert-costs are assigned to pairs according to their critical levels.  A pharmacist (CL) reviewed the errors from the vanilla Baseline model to identify examples of critical pairs, which were weighted based on their potential danger. Table \ref{table:NLM20_Cost} lists the assigned expert-costs for the four critical pairs, and the remaining non-critical pairs are assigned costs of 1. Starting from the vanilla Baseline model, we trained the model with three different methods and compared their performance with the vanilla Baseline model. To be consistent with the Baseline model, ADAM optimizer is used for all three cost-sensitive methods. Our method (CSADA) was trained for $10$ epochs at a fixed learning rate $\eta_1$ of $1\times10^{-7}$. Other hyperparameters are $\eta_2=1\times10^{-4}$, $K=50$, $\lambda=2$, $\epsilon=1$. The penalty method and $\text{CNN}_{\text{SOSR}}$ were trained for $50$ epochs, and the learning rate starts at $10^{-5}$ and decays by $0.33$ every $10$ epochs. We choose $\alpha=5$ for the AP method. The hyperparameters for the two benchmarks were selected to achieve the best performance on the validation data set. The testing performance is shown in Table \ref{table:NLM20_Compare_34}. Five replications were conducted for each method, where the mean of the five replications is presented along with the standard deviation in the bracket. Here we directly reported the total costs since the mistakes are few. The total cost is simply the summation of expert costs on all misclassifications. We also reported the number of errors on different pairs in Table \ref{table:NLM20_Cost}. The results show that only CSADA was able to cut the cost (by half) and prevent some critical errors. In turn, AP and SOSR exhibited no statistically significant improvement over the Baseline, especially given the high variance. Indeed, this is not surprising as the training loss was about $0.003$. This again confirms that in over-parametrized models where model capacity is not a concern, cost-sensitive training can be very challenging if we completely depend on a training set that can be perfectly fit. 

\begin{table}[htbp]
\centering
\resizebox{\columnwidth}{!}{%
\begin{tabular}{ccccc}
\hline
                                          & Baseline & CSADA              & AP          & SOSR        \\ \hline
Total Cost                                      & 36.0       & \textbf{18.0(1.0)} & 31.0(7.8)  & 29.2(8.8)  \\
50111-0434 to 00591-0461 Error Rate (\%)  & 0.50      & 0.00(0.00)         & 0.10(0.22)  & 0.30(0.27)  \\
53489-0156 to 68382-0227 Error Rate (\%)  & 0.00     & 0.00(0.00)         & 0.00(0.00)  & 0.10(0.22)  \\
53746-0544 to 00378-0208 Error Rate (\%)  & 3.12     & 3.12(0.00)         & 3.12(0.00)  & 3.12(0.00)  \\
68382-0227 to 53489-0156 Error Rate (\%) & 2.56     & 0.00(0.00)         & 2.56(1.81)  & 0.00(0.00)  \\
Overall Accuracy (\%)                     & 99.61    & 99.68(0.04)        & 99.53(0.09) & 99.53(0.05) \\ \hline
\end{tabular}
}
\caption{Comparison Across Benchamrks (ResNet-34)}
\label{table:NLM20_Compare_34}
\end{table}

\begin{table}[htbp]
\centering
\resizebox{\columnwidth}{!}{%
\begin{tabular}{cccccc}
\hline
Error Type                      & Expert Cost  & Baseline & CSADA    & AP         & SOSR      \\ \hline
50111-0434 to 00591-0461  & 10                    & 1        & 0.0(0.0) & 0.2(0.4)   & 0.6(0.5)  \\
53489-0156 to 68382-0227  & 10                    & 0        & 0.0(0.0) & 0.0(0.0)   & 0.2(0.4)  \\
53746-0544 to 00378-0208  & 10                    & 1        & 1.0(0.0) & 1.0(0.0)   & 1.0(0.0)  \\
68382-0227 to 53489-0156  & 8                     & 1        & 0.0(0.0) & 1.0(0.7)   & 0.0(0.0)  \\
Non-critical Pairs              & 1                     & 8        & 8.0(1.0) & 11(2.3) & 11.2(0.8) \\ \hline
\end{tabular}
}
\caption{Average Number of Errors Across 5 Replications}
\label{table:NLM20_Cost}
\end{table}

To further confirm the challenges in over-parameterization, we compared the three models on a shallower network (ResNet-9). We trained the Baseline model under cross-entropy loss using ADAM optimizer for $300$ epochs. The learning rate started with $0.1$ and decayed every $50$ epochs by $0.33$. Using the Baseline model, CSADA was trained for additional $10$ epochs with a fixed learning rate $\eta_1$ of $5\times 10^{-5}$. Other hyperparameters follow $\eta_2=0.01$, $K=10$, $\lambda=10$, $\epsilon=1$, and $\tau=3$. Both AP and SOSR were trained for $50$ epochs starting from the Baseline. The learning rate started at $0.001$ and decayed every $10$ epochs by $0.33$. Hyperparameter $\alpha$ in AP was set to $5$. 

As shown in Table \ref{table:NLM20_Compare_9} the testing accuracy is now much smaller since we are using a model with less capacity. Also, the training error of Baseline now increased to around 0.09. While AP and SOSR reduced the cost, our model still shows superior performance. Interestingly, both methods AP and SOSR performed better on the smaller, less-complex network compared to ResNet-34. This further emphasizes our claim that existing cost-aware approaches encounter degraded performance when over-parameterized networks are used. 


\begin{table}[htbp]
\centering
\resizebox{\columnwidth}{!}{%
\begin{tabular}{ccccc}
\hline
                                          & Baseline & CSADA              & AP          & SOSR        \\ \hline
Total Cost                                      & 434.0       & \textbf{345.4(14.2)} & 381.2(16.6)  & 413.6(23.8)  \\
50111-0434 to 00591-0461 Error Rate (\%)  & 4.50      & 1.70(0.84)         & 2.00(0.79)  & 1.40(0.42)  \\
53489-0156 to 68382-0227 Error Rate (\%)  & 1.50     & 1.70(0.45)         & 1.70(0.27)  & 1.20(0.91)  \\
53746-0544 to 00378-0208 Error Rate (\%)  & 18.75     & 15.12(2.83)         & 15.62(2.21)  & 13.12(1.40)  \\
68382-0227 to 53489-0156 Error Rate (\%) & 25.64     & 21.54(2.92)         & 21.54(3.44)  & 30.77(11.47)  \\
Overall Accuracy (\%)                     & 92.75    & 92.84(0.17)        & 92.43(0.17) & 91.21(0.35) \\ \hline
\end{tabular}
}
\caption{Comparison of Methods (ResNet-9)}
\label{table:NLM20_Compare_9}
\end{table}

\section{Conclusion}
\label{sec:conclusion}
In this paper, we study the problem of cost-sensitive classification that arises in applications where different misclassifications errors have different costs. For this problem, we propose a data augmentation cost-sensitive method that generates various targeted adversarial examples that are used in training to push decision boundaries in directions that minimize critical errors. Mathematically, we propose a penalized cost-aware bi-level optimization framework that penalizes the loss incurred by the generated adversarial examples. We further propose a multi ascent descent gradient-based algorithm for solving the optimization problem. The empirical performance of our model is demonstrated through various experiments on MNIST, CIFAR-10, and a pharmacy medication image (PMI) dataset, which we made publicly available. In all experiments, our method effectively minimized the overall cost
and reduced critical errors while achieving comparable performance in terms of overall accuracy. While we motivate the significance of our model in over-parameterized DNNs, our framework can be easily applied to other machine learning models.

\appendix[PMI Dataset Metadata]
\label{append_summary}
In this section, we provide aThis appendix provides a reference table characterizing the key physical features of each medication (see Table \ref{table:NLM20_summary}). The \textit{Imprint} column lists all the imprinted labels on medication pills. The size column reports the width of pills in millimeters. Within an NDC, images are randomly split into training/validation/testing sets at a ratio of approximately 6:2:2. Classes are imbalanced due to their different frequencies during data collection.
\begin{table*}[tb]
\centering
\resizebox{\linewidth}{!}{%
\begin{tabular}{llccccccc}
\hline
NDC        & Drug Name                                  & Shape   & Color  & Imprint    & Size & Training Size & Validation Size & Testing Size \\ \hline
00378-0208 & Furosemide 20 MG Oral Tablet               & Round   & White  & M2         & 6    & 602           & 199             & 200          \\
00378-3855 & Escitalopram 5 MG Oral Tablet              & Round   & White  & M;EC5      & 6    & 129           & 42              & 42           \\
00591-0461 & Glipizide 10 MG Oral Tablet                & Round   & White  & Watson;461 & 10   & 148           & 48              & 48           \\
16729-0020 & pioglitazone 15 MG Oral Tablet             & Round   & White  & P;15       & 5    & 258           & 84              & 85           \\
16729-0168 & Escitalopram 5 MG Oral Tablet              & Round   & White  & 5          & 5    & 129           & 42              & 42           \\
50111-0434 & Trazodone Hydrochloride 100 MG Oral Tablet            & Round & White  & PLIVA;434 & 11 & 601 & 199 & 200 \\
53489-0156 & Allopurinol 100 MG Oral Tablet             & Round   & White  & MP;71      & 10   & 600           & 200             & 200          \\
53746-0544 & Primidone 50 MG Oral Tablet                & Round   & White  & AN;44      & 6    & 97            & 31              & 32           \\
57664-0377 & tramadol hydrochloride 50 MG Oral Tablet   & Capsule & White  & 377        & 13   & 601           & 199             & 200          \\
62037-0831 & 24 HR metoprolol succinate 50 MG Extended  & Round   & White  & 831        & 9    & 600           & 200             & 200          \\
62037-0832 & 24 HR metoprolol succinate 100 MG Extended & Round   & White  & 832        & 10   & 601           & 200             & 200          \\
64380-0803 & Ranitidine 150 MG Oral Tablet (Strides Pharma         & Round & Brown  & S;429     & 10 & 548 & 181 & 182 \\
65162-0253 & Ranitidine 150 MG Oral Tablet (Amneal Pharmaceuticals & Round & Orange & IP;253    & 9  & 557 & 184 & 185 \\
67253-0901 & Alprazolam 0.5 MG Oral Tablet              & Oval    & Yellow & S901       & 9    & 497           & 164             & 165          \\
68382-0008 & lamotrigine 100 MG Oral Tablet             & Round   & White  & ZC;80      & 10   & 423           & 139             & 140          \\
68382-0227 & Amiodarone hydrochloride 200 MG Oral Tablet           & Round & White  & ZE;65     & 10 & 120 & 39  & 39  \\
69097-0127 & Amlodipine 5 MG Oral Tablet                & Round   & White  & 127;C      & 8    & 600           & 200             & 200          \\
69097-0128 & Amlodipine 10 MG Oral Tablet               & Round   & White  & 128;C      & 8    & 600           & 200             & 200          \\
69315-0904 & Lorazepam 0.5 MG Oral Tablet               & Round   & White  & EP;904     & 5    & 357           & 118             & 118          \\
69315-0905 & Lorazepam 1 MG Oral Tablet                 & Round   & White  & EP;905;1   & 7    & 325           & 107             & 108          \\ \hline
\end{tabular}
}
  \caption{Metadata for PMI Dataset}
  \label{table:NLM20_summary}
\end{table*}

\ifCLASSOPTIONcompsoc
  \section*{Acknowledgments}
\else
  \section*{Acknowledgment}
\fi
Research reported in this publication was supported by the National Library of Medicine of the National Institutes of Health in the United States under award number R01LM013624.

\bibliography{references}

\begin{thebibliography}{10}
\providecommand{\url}[1]{#1}
\csname url@samestyle\endcsname
\providecommand{\newblock}{\relax}
\providecommand{\bibinfo}[2]{#2}
\providecommand{\BIBentrySTDinterwordspacing}{\spaceskip=0pt\relax}
\providecommand{\BIBentryALTinterwordstretchfactor}{4}
\providecommand{\BIBentryALTinterwordspacing}{\spaceskip=\fontdimen2\font plus
\BIBentryALTinterwordstretchfactor\fontdimen3\font minus
  \fontdimen4\font\relax}
\providecommand{\BIBforeignlanguage}[2]{{%
\expandafter\ifx\csname l@#1\endcsname\relax
\typeout{** WARNING: IEEEtran.bst: No hyphenation pattern has been}%
\typeout{** loaded for the language `#1'. Using the pattern for}%
\typeout{** the default language instead.}%
\else
\language=\csname l@#1\endcsname
\fi
#2}}
\providecommand{\BIBdecl}{\relax}
\BIBdecl

\bibitem{dosovitskiy2020image}
A.~Dosovitskiy, L.~Beyer, A.~Kolesnikov, D.~Weissenborn, X.~Zhai,
  T.~Unterthiner, M.~Dehghani, M.~Minderer, G.~Heigold, S.~Gelly \emph{et~al.},
  ``An image is worth 16x16 words: Transformers for image recognition at
  scale,'' \emph{arXiv preprint arXiv:2010.11929}, 2020.

\bibitem{liu2021swin}
Z.~Liu, Y.~Lin, Y.~Cao, H.~Hu, Y.~Wei, Z.~Zhang, S.~Lin, and B.~Guo, ``Swin
  transformer: Hierarchical vision transformer using shifted windows,'' in
  \emph{Proceedings of the IEEE/CVF International Conference on Computer
  Vision}, 2021, pp. 10\,012--10\,022.

\bibitem{devlin2018bert}
J.~Devlin, M.-W. Chang, K.~Lee, and K.~Toutanova, ``Bert: Pre-training of deep
  bidirectional transformers for language understanding,'' \emph{arXiv preprint
  arXiv:1810.04805}, 2018.

\bibitem{brown2020language}
T.~Brown, B.~Mann, N.~Ryder, M.~Subbiah, J.~D. Kaplan, P.~Dhariwal,
  A.~Neelakantan, P.~Shyam, G.~Sastry, A.~Askell \emph{et~al.}, ``Language
  models are few-shot learners,'' \emph{Advances in neural information
  processing systems}, vol.~33, pp. 1877--1901, 2020.

\bibitem{sun2022continual}
W.~Sun, R.~A. Kontar, J.~Jin, and T.-S. Chang, ``A continual learning framework
  for adaptive defect classification and inspection,'' \emph{arXiv preprint
  arXiv:2203.08796}, 2022.

\bibitem{yue2021gifair}
X.~Yue, M.~Nouiehed, and R.~A. Kontar, ``Gifair-fl: An approach for group and
  individual fairness in federated learning,'' \emph{arXiv preprint
  arXiv:2108.02741}, 2021.

\bibitem{errors}
{Health Services, Department}, ``National action plan for adverse drug event
  prevention, office of disease prevention and health promotion, department of
  health and human services,''
  \url{https://health.gov/sites/default/files/2019-09/ADE-Action-Plan-508c.pdf},
  2019, accessed: 2021-11-01.

\bibitem{ismp}
{Institute for Safe Medication Practices (ISMP)}, ``Ismp list of high-alert
  medications in community/ambulatory care settings,''
  ISMP\url{https://www.ismp.org/recommendations/high-alert-medications-community-ambulatory-list},
  2021, accessed: 2022-07-10.

\bibitem{breiman1984classification}
L.~Breiman, J.~Friedman, R.~Olshen, and C.~Stone, \emph{Classification and
  Regression Trees}.\hskip 1em plus 0.5em minus 0.4em\relax Wadsworth, 1984.

\bibitem{zhang2016understanding}
C.~Zhang, S.~Bengio, M.~Hardt, B.~Recht, and O.~Vinyals, ``Understanding deep
  learning requires rethinking generalization,'' \emph{arXiv preprint
  arXiv:1611.03530}, 2016.

\bibitem{zhang2021understanding}
------, ``Understanding deep learning (still) requires rethinking
  generalization,'' \emph{Communications of the ACM}, vol.~64, no.~3, pp.
  107--115, 2021.

\bibitem{arpit2017closer}
D.~Arpit, S.~Jastrz{\k{e}}bski, N.~Ballas, D.~Krueger, E.~Bengio, M.~S. Kanwal,
  T.~Maharaj, A.~Fischer, A.~Courville, Y.~Bengio \emph{et~al.}, ``A closer
  look at memorization in deep networks,'' in \emph{International conference on
  machine learning}.\hskip 1em plus 0.5em minus 0.4em\relax PMLR, 2017, pp.
  233--242.

\bibitem{belkin2018overfitting}
M.~Belkin, D.~J. Hsu, and P.~Mitra, ``Overfitting or perfect fitting? risk
  bounds for classification and regression rules that interpolate,''
  \emph{Advances in neural information processing systems}, vol.~31, 2018.

\bibitem{deng2012mnist}
L.~Deng, ``The mnist database of handwritten digit images for machine learning
  research,'' \emph{IEEE Signal Processing Magazine}, vol.~29, no.~6, pp.
  141--142, 2012.

\bibitem{bahnsen2014example}
A.~C. Bahnsen, D.~Aouada, and B.~Ottersten, ``Example-dependent cost-sensitive
  logistic regression for credit scoring,'' in \emph{2014 13th International
  conference on machine learning and applications}.\hskip 1em plus 0.5em minus
  0.4em\relax IEEE, 2014, pp. 263--269.

\bibitem{lomax2013survey}
S.~Lomax and S.~Vadera, ``A survey of cost-sensitive decision tree induction
  algorithms,'' \emph{ACM Computing Surveys (CSUR)}, vol.~45, no.~2, pp. 1--35,
  2013.

\bibitem{iranmehr2019cost}
A.~Iranmehr, H.~Masnadi-Shirazi, and N.~Vasconcelos, ``Cost-sensitive support
  vector machines,'' \emph{Neurocomputing}, vol. 343, pp. 50--64, 2019.

\bibitem{kukar1998cost}
M.~Kukar, I.~Kononenko \emph{et~al.}, ``Cost-sensitive learning with neural
  networks.'' in \emph{ECAI}, vol. 15(27).\hskip 1em plus 0.5em minus
  0.4em\relax Citeseer, 1998, pp. 88--94.

\bibitem{zhou2005training}
Z.-H. Zhou and X.-Y. Liu, ``Training cost-sensitive neural networks with
  methods addressing the class imbalance problem,'' \emph{IEEE Transactions on
  knowledge and data engineering}, vol.~18, no.~1, pp. 63--77, 2005.

\bibitem{chung2015cost}
Y.-A. Chung, H.-T. Lin, and S.-W. Yang, ``Cost-aware pre-training for
  multiclass cost-sensitive deep learning,'' \emph{arXiv preprint
  arXiv:1511.09337}, 2015.

\bibitem{chung2020cost}
Y.-A. Chung, S.-W. Yang, and H.-T. Lin, ``Cost-sensitive deep learning with
  layer-wise cost estimation,'' in \emph{2020 International Conference on
  Technologies and Applications of Artificial Intelligence (TAAI)}.\hskip 1em
  plus 0.5em minus 0.4em\relax IEEE, 2020, pp. 108--113.

\bibitem{domingos1999metacost}
P.~Domingos, ``Metacost: A general method for making classifiers
  cost-sensitive,'' in \emph{Proceedings of the fifth ACM SIGKDD international
  conference on Knowledge discovery and data mining}, 1999, pp. 155--164.

\bibitem{zadrozny2001learning}
B.~Zadrozny and C.~Elkan, ``Learning and making decisions when costs and
  probabilities are both unknown,'' in \emph{Proceedings of the seventh ACM
  SIGKDD international conference on Knowledge discovery and data mining},
  2001, pp. 204--213.

\bibitem{guo2017calibration}
C.~Guo, G.~Pleiss, Y.~Sun, and K.~Q. Weinberger, ``On calibration of modern
  neural networks,'' in \emph{International conference on machine
  learning}.\hskip 1em plus 0.5em minus 0.4em\relax PMLR, 2017, pp. 1321--1330.

\bibitem{tu2010one}
H.-H. Tu and H.-T. Lin, ``One-sided support vector regression for multiclass
  cost-sensitive classification,'' in \emph{ICML}, 2010.

\bibitem{he2016deep}
K.~He, X.~Zhang, S.~Ren, and J.~Sun, ``Deep residual learning for image
  recognition,'' in \emph{Proceedings of the IEEE conference on computer vision
  and pattern recognition}, 2016, pp. 770--778.

\bibitem{vaswani2017attention}
A.~Vaswani, N.~Shazeer, N.~Parmar, J.~Uszkoreit, L.~Jones, A.~N. Gomez,
  {\L}.~Kaiser, and I.~Polosukhin, ``Attention is all you need,''
  \emph{Advances in neural information processing systems}, vol.~30, 2017.

\bibitem{zadrozny2003cost}
B.~Zadrozny, J.~Langford, and N.~Abe, ``Cost-sensitive learning by
  cost-proportionate example weighting,'' in \emph{Third IEEE international
  conference on data mining}.\hskip 1em plus 0.5em minus 0.4em\relax IEEE,
  2003, pp. 435--442.

\bibitem{chan1999toward}
P.~Chan and S.~J. Stolfo, ``Toward scalable learning with non-uniform
  distributions: Effects and a multi-classifier approach,'' in \emph{In
  Proceedings of the Fourth International Conference on Knowledge Discovery and
  Data Mining}.\hskip 1em plus 0.5em minus 0.4em\relax Citeseer, 1999.

\bibitem{szegedy2013intriguing}
C.~Szegedy, W.~Zaremba, I.~Sutskever, J.~Bruna, D.~Erhan, I.~Goodfellow, and
  R.~Fergus, ``Intriguing properties of neural networks,'' \emph{arXiv preprint
  arXiv:1312.6199}, 2013.

\bibitem{goodfellow2014explaining}
I.~J. Goodfellow, J.~Shlens, and C.~Szegedy, ``Explaining and harnessing
  adversarial examples,'' \emph{arXiv preprint arXiv:1412.6572}, 2014.

\bibitem{moosavi2016deepfool}
S.-M. Moosavi-Dezfooli, A.~Fawzi, and P.~Frossard, ``Deepfool: a simple and
  accurate method to fool deep neural networks,'' in \emph{Proceedings of the
  IEEE conference on computer vision and pattern recognition}, 2016, pp.
  2574--2582.

\bibitem{kurakin2016adversarial}
A.~Kurakin, I.~Goodfellow, S.~Bengio \emph{et~al.}, ``Adversarial examples in
  the physical world,'' 2016.

\bibitem{donhauser2021interpolation}
K.~Donhauser, A.~Tifrea, M.~Aerni, R.~Heckel, and F.~Yang, ``Interpolation can
  hurt robust generalization even when there is no noise,'' \emph{Advances in
  Neural Information Processing Systems}, vol.~34, pp. 23\,465--23\,477, 2021.

\bibitem{hassani2022curse}
H.~Hassani and A.~Javanmard, ``The curse of overparametrization in adversarial
  training: Precise analysis of robust generalization for random features
  regression,'' \emph{arXiv preprint arXiv:2201.05149}, 2022.

\bibitem{Volpi2018unseen}
\BIBentryALTinterwordspacing
R.~Volpi, H.~Namkoong, O.~Sener, J.~C. Duchi, V.~Murino, and S.~Savarese,
  ``Generalizing to unseen domains via adversarial data augmentation,''
  \emph{CoRR}, vol. abs/1805.12018, 2018. [Online]. Available:
  \url{http://arxiv.org/abs/1805.12018}
\BIBentrySTDinterwordspacing

\bibitem{cao2017mitigating}
X.~Cao and N.~Z. Gong, ``Mitigating evasion attacks to deep neural networks via
  region-based classification,'' in \emph{Proceedings of the 33rd Annual
  Computer Security Applications Conference}, 2017, pp. 278--287.

\bibitem{heo2019knowledge}
B.~Heo, M.~Lee, S.~Yun, and J.~Y. Choi, ``Knowledge distillation with
  adversarial samples supporting decision boundary,'' in \emph{Proceedings of
  the AAAI Conference on Artificial Intelligence}, vol. 33(01), 2019, pp.
  3771--3778.

\bibitem{zhang2018cost}
X.~Zhang and D.~Evans, ``Cost-sensitive robustness against adversarial
  examples,'' \emph{arXiv preprint arXiv:1810.09225}, 2018.

\bibitem{shen2022adversarial}
H.~Shen, S.~Chen, R.~Wang, and X.~Wang, ``Adversarial learning with
  cost-sensitive classes,'' \emph{IEEE Transactions on Cybernetics}, 2022.

\bibitem{sampath2021survey}
V.~Sampath, I.~Maurtua, J.~J. Aguilar~Mart{\'\i}n, and A.~Gutierrez, ``A survey
  on generative adversarial networks for imbalance problems in computer vision
  tasks,'' \emph{Journal of big Data}, vol.~8, no.~1, pp. 1--59, 2021.

\bibitem{madry2018towards}
A.~Madry, A.~Makelov, L.~Schmidt, D.~Tsipras, and A.~Vladu, ``Towards deep
  learning models resistant to adversarial attacks,'' in \emph{International
  Conference on Learning Representations}, 2018.

\bibitem{amiodarone}
{Wyeth Pharmaceuticals Inc}, ``Cordarone [package insert],''
  \url{https://www.accessdata.fda.gov/drugsatfda_docs/label/2018/018972s054lbl.pdf},
  2018, accessed: 2022-07-08.

\bibitem{kingma2015adam}
D.~P. Kingma and J.~Ba, ``Adam: A method for stochastic optimization,'' in
  \emph{ICLR (Poster)}, 2015.

\end{thebibliography}
\end{document}